\documentclass{article}

\usepackage{arxiv}
\usepackage{xcolor}

\definecolor{darkred}{rgb}{0.6, 0, 0}
\usepackage[utf8]{inputenc} % allow utf-8 input
\usepackage[T1]{fontenc}    % use 8-bit T1 fonts
\usepackage{hyperref}       % hyperlinks
\usepackage{url}            % simple URL typesetting
\usepackage{booktabs}       % professional-quality tables
\usepackage{amsfonts}       % blackboard math symbols
\usepackage{nicefrac}       % compact symbols for 1/2, etc.
\usepackage{microtype}      % microtypography
\usepackage{lipsum}		% Can be removed after putting your text content
\usepackage{graphicx}
\usepackage{natbib}
\usepackage{doi}
\usepackage{algorithm}
\usepackage{algorithmic}
\usepackage{amsmath} 
\usepackage{amssymb}
\usepackage{comment}
\usepackage{mathbbol}

\usepackage{setspace}       % for double spacing

\usepackage{newfloat}
\usepackage{listings}
\usepackage{subcaption}

\newtheorem{assumption}{Assumption}
\newtheorem{theorem}{Theorem}[section]

\title{Online Survival Analysis:\\ A Bandit Approach under Cox PH Model}

\date{} 					% Or removing it

\author{ 
Yang Xu%\thanks{} 
\\
	Department of Statistics\\
	North Carolina State University\\
	\texttt{yxu63@ncsu.edu} \\
	\And
        Wenbin Lu \\
	Department of Statistics\\
	North Carolina State University\\
	\texttt{wlu4@ncsu.edu} \\
 	\And
        Rui Song \\
	Department of Statistics\\
	North Carolina State University\\
	\texttt{songray@gmail.com} \\
}

\renewcommand{\headeright}{}
\renewcommand{\undertitle}{}
\renewcommand{\shorttitle}{Online Survival Analysis: A Bandit Approach under Cox PH Model}

\hypersetup{
pdftitle={Online Survival Analysis: A Bandit Approach under Cox PH Model},
pdfsubject={q-bio.NC, q-bio.QM},
pdfauthor={David S.~Hippocampus, Elias D.~Striatum},
pdfkeywords={First keyword, Second keyword, More},
}

\begin{document}
\maketitle

\begin{abstract}
Survival analysis is a widely used statistical framework for modeling time-to-event data under censoring. Classical methods, such as the Cox proportional hazards (Cox PH) model, offer a semiparametric approach to estimating the effects of covariates on the hazard function. Despite its importance, survival analysis has been largely unexplored in online settings, particularly within the bandit framework, where decisions must be made sequentially to optimize treatments as new data arrive over time. In this work, we take an initial step toward integrating survival analysis into a purely online learning setting under the Cox PH model, addressing key challenges including staggered entry, delayed feedback, and right censoring. We adapt three canonical bandit algorithms to balance exploration and exploitation, with theoretical guarantees of sublinear regret bounds. Extensive simulations and semi-real experiments using SEER cancer data demonstrate that our approach enables rapid and effective learning of near-optimal treatment policies.
\end{abstract}

% keywords can be removed
\keywords{Survival Analysis \and Contextual Bandits \and Cox PH \and Staggered Entry \and Right Censoring \and Delayed Reward}

\section{Introduction}

Survival analysis is an essential tool for modeling time-to-event data in various fields, including precision medicine to prolong patient survival in cancer studies \citep{altman1995review, nagy2021pancancer, gyorffy2010online}, predictive maintenance to reduce costs in engineering systems \citep{widodo2011machine, papathanasiou2023machine}, and customer retention analysis to predict the duration of platform engagement \citep{lu2002predicting, harrison2002customer}. Classical survival models can be categorized into three types: parametric models such as the Accelerated Failure Time (AFT) model \citep{kalbfleisch2002statistical}, semi-parametric models such as the Cox proportional hazards (PH) model \citep{cox1972regression}, and non-parametric models such as the Kaplan-Meier estimator \citep{kaplan1958nonparametric}. Among these, the Cox PH model remains widely adopted in practice due to its balance of flexibility and interpretability.

A common practical challenge arises when subjects enter the study at different times, known as \emph{staggered entry}. In this case, the risk set at each event time depends on the delayed entry, which requires careful accounting to correctly construct the likelihood function. Most existing survival studies, whether or not they handle staggered entry, are conducted in offline settings: researchers wait for lengthy survival data to be fully collected before analyzing it, making predictions, and optimizing decisions post hoc. While effective, this approach can be slow, costly, and inefficient, particularly in scenarios with long time-to-event outcomes.

For example, consider a pharmaceutical company testing a preventive drug for seasonal allergy symptoms in a Phase II or III trial. Classical practice involves assigning patients to treatment or control groups, waiting for a sufficient follow-up period, analyzing the resulting time-to-event data (in this case, the time until the next seasonal allergy episode), and then evaluating the drug’s efficacy and potential personalized treatment recommendations. This process can take months or even years, especially when events occur seasonally, delaying iterative learning and decision-making. An online updating system, such as bandits, could significantly shorten this feedback loop by adapting treatment recommendations as data accumulate.

Bandits, a special case of reinforcement learning, have been successfully applied in diverse domains such as recommendation systems \citep{silva2022multi}, robotics \citep{kober2013reinforcement}, and portfolio selection \citep{shen2015portfolio}. Bandit algorithms dynamically balance exploration and exploitation, continuously updating the system’s understanding to provide better action selection over time. Since individuals often enter studies at different times, survival data naturally fit into the bandit framework. An effective decision-making system should adapt to the newly observed outcomes of all past data, providing personalized recommendations to maximize survival or retention. To our knowledge, however, no existing work addresses online survival updates for time-to-event data in a bandit setting.

Several challenges complicate this integration. First, survival data inherently involve delayed and sometimes censored feedback, where the “reward” itself corresponds to the observed time-to-event. While recent bandit studies have considered delayed rewards \citep{tang2024stochastic, schlisselberg2025delay, zhang2025contextual}, they typically do not account for censoring, which further complicates modeling and prolongs the feedback loop. Second, although streaming survival models have been proposed \citep{xue2020online, wu2021online, su2023divide}, these methods primarily focus on accurate prediction rather than decision-making and are not designed to handle staggered entry. Third, designing effective exploration strategies and establishing regret guarantees is particularly challenging with survival data, as the dynamic risk set introduces additional variability that breaks the standard martingale concentration tools typically used to derive uniform bounds for parameter estimates.

In this paper, we make the following contributions:

\textbf{An integrated online bandit framework for survival data with right censoring.} Our method simultaneously addresses right censoring, staggered entry, and delayed feedback, enabling timely and adaptive decision-making rather than requiring offline analysis after all censoring outcomes are observed. This capability is particularly valuable for applications such as clinical trials, precision medicine, and customer churn modeling.

\textbf{Flexible algorithms for online survival updates.} We extensively study how the risk set evolves in a bandit setting, extending classical offline survival analysis structure to an online setting without imposing stochastic assumptions on the covariates. We adapt standard bandit strategies, including Epsilon-Greedy (EG), Upper Confidence Bound (UCB), and Thompson Sampling (TS), to enable flexible exploration in the survival context.

\textbf{Confidence Ellipsoid and Sublinear regret guarantees.} The nature of staggered entry in bandits invalidates classical martingale concentration arguments, creating substantial theoretical challenges. We establish the first confidence ellipsoid for online survival settings and provide a regret bound of order $\tilde{O}(d\sqrt{T})$ flexible enough to applying to EG, UCB, and TS. Semi-real experiments on SEER cancer data and simulations further demonstrate that our approach rapidly converges to near-optimal policies.

\section{Related Work}
Our work addresses survival analysis with staggered entry in an online setting, and is mainly related to three main threads of literature. Below, we summarize the state-of-the-art methods in each area.

\paragraph{Offline Survival Analysis with Staggered Entry} 
Offline survival analysis with staggered entry has been extensively studied over the past few decades, primarily in the context of the Cox PH model and its variants. Notably, \cite{andersen1982cox} formulated the Cox PH model using counting processes, rigorously handling staggered entry and establishing asymptotic properties. Later, \cite{sellke1983sequential} further consolidated the theory by studying martingale properties and approximations under the calendar time scale. Comprehensive reviews \citep{kalbfleisch2002statistical, therneau2000cox} summarize popular survival models and practical extensions. More recently, there have been adaptations of survival models to modern nonparametric approaches, including neural networks without staggered entry \citep{katzman2018deepsurv} and Bayesian nonparametric models with staggered entry \citep{zhang2023weibull}. Overall, while classical models such as the Cox PH model remain widely used in practice, the field is gradually moving toward more flexible modeling approaches.

\paragraph{Offline Survival Analysis with Streaming Data} 
Another relevant thread concerns survival analysis with streaming data, where survival data arrive sequentially in blocks. Representative works include \cite{shaker2014survival}, which uses a sliding-window approach assuming the hazard rate is constant within each window; \cite{xue2020online}, which applies a first-order Taylor expansion for online approximation of the partial likelihood; and \cite{wang2021fast}, which proposes a divide-and-conquer approach with one-step linear and least-squares approximations of the partial likelihood. However, these methods do not operate in a truly online setting: they assume that, once a batch of data arrives, the reward and censoring status are immediately observable. In contrast, in the online bandit setting we consider, samples arrive sequentially and rewards may be delayed, introducing additional complexity. Therefore, while related, these methods are not directly comparable to our framework.
%We note a recent concurrent work \cite{anderer2026online} that also studies bandit frameworks for survival data. Compared to this work, our approach considers a more general $K$-arm setting without restrictive assumptions on subject arrival times, accommodates a broader class of exploration strategies beyond Thompson sampling, and establishes cleaner sublinear regret guarantees that hold under more flexibly defined event rates.

\paragraph{Online Bandits with Delayed Reward} 
A third thread comes from the bandit literature, specifically multi-armed and contextual bandits with delayed rewards. Notable works include \cite{tang2024stochastic, schlisselberg2025delay}, which extend classical UCB and Successive Elimination algorithms to handle reward delays, and \cite{zhang2025contextual}, which extends these ideas to linear contextual bandits. However, these approaches focus exclusively on the no-censoring case, which differs from classical survival analysis where rewards are not only delayed but also subject to right censoring. Thus, while these works are related conceptually, they are not directly comparable to our setting.

\section{Offline Cox PH Model with Staggered Entry}
Before introducing our online bandit framework, we first review how the Cox PH model accommodates staggered entry in a purely offline setting. 

The Cox PH model \citep{cox1972regression} is among the most widely used tools in survival analysis. It models the hazard of an event at survival time $s$ as
\begin{equation*}
    \lambda(s\mid X) = \lambda_0(s)\exp(X^\top\beta), \quad s \geq 0,
\end{equation*}
where $\lambda_0(s)$ is the baseline hazard function, $X$ is a feature vector, and $\beta$ is an unknown coefficient vector quantifying covariate effects.

In the idealized case where all subjects enter the study simultaneously, a subject is considered \textit{at risk} at time $s$ if $s \leq Y_i \wedge C_i$, where $Y_i$ is the true survival time and $C_i$ is the censoring time. With staggered entry, however, it is necessary to consider two time scales that jointly determine the risk set: the \emph{calendar time} $\tau$, representing an absolute timestamp, and the \emph{survival time} $s$, representing the elapsed time since a subject’s entry \citep{sellke1983sequential, bilias1997towards}. 

For $N$ subjects with staggered entry times $\tau_i$, the risk status of subject $i$ at calendar time $\tau$ depends on whether the subject has entered the study. This can be captured by the entry-time offset $(\tau - \tau_i)^+ = \max\{\tau - \tau_i, 0\}$, which can be viewed as an additional censoring mechanism alongside $C_i$. Consequently, the risk set at $(\tau, s)$ is defined as
\begin{equation}
    \mathcal{R}(\tau,s) = \{i\in[N]: s\leq Y_i\wedge C_i\wedge (\tau-\tau_i)^+\},
\end{equation}
which consists of subjects who are under observation at calendar time $\tau$ and have survived for at least $s$ time units since entry. We then define $N_i(\tau,s)=\boldsymbol{1}\{Y_i\leq s\wedge C_i\wedge (\tau-\tau_i)^+\}$, which represents the event counting process for subject $i$ evaluated at the calendar-survival time pair $(\tau,s)$. 

With these definitions, the log partial likelihood for $\beta$, denoted by $l(\tau,\beta)$, takes the form
\begin{equation}\label{eq:l_tau_beta}
\sum_{i=1}^N \int_0^\tau \Big\{X_i^\top\beta-\log \sum_{j\in\mathcal{R}(\tau,s)}\exp(X_j^\top\beta)\Big\}N_i(\tau,ds).
\end{equation}
Differentiating with respect to $\beta$ yields the score function
\begin{equation}\label{eq:score_func}
    U(\tau,\beta) := \sum_{i=1}^N \int_0^{\tau} \{X_i - \bar{X}(\tau,s)\} N_i(\tau,ds),
\end{equation}
where $\bar{X}(\tau,s) := \frac{\sum_{j\in \mathcal{R}(\tau,s)} X_j(s) \exp\{X_j(s)^\top \beta\}}{\sum_{j\in \mathcal{R}(\tau,s)}\exp\{X_j(s)^\top \beta\}}$. Solving $U(\tau,\beta)=0$ gives the maximum partial likelihood estimator $\hat{\beta}$, which can then be combined with an estimator of the baseline hazard function to derive survival function estimates and guide downstream decision making.

\section{Online Survival Bandits}
We now extend the discussion to the online setting and introduce our framework for adapting survival analysis to the bandit system. 

Suppose there are $T$ rounds in total. In each round $t\in[T]:=\{1,\dots,T\}$, a new subject $i \in[N]$ enters the study at calendar time $\tau_i$ with covariate information $S_i \in \mathbb{R}^{d_0}$ (e.g., age, gender, or prescription history). For clarity, we use $i$ to index subjects and $t$ to index rounds. Since one subject arrives per round, $N$ and $T$ can often be used interchangeably. Once subject $i$ enters at round $t$, they remain in the study for all subsequent rounds $\{t,\dots,T\}$.

The bandit system selects an action $a_i \in \mathcal{A}$ from a total of $|\mathcal{A}|=K$ choices (e.g., treatment assignment for patients, such as different drug doses), and determines a follow-up duration $C_i$, which we interpret as a censoring time in the survival analysis context. The right-censored outcome $R_i:= Y_i\wedge C_i :=\min\{Y_i,C_i\}$, where $Y_i$ denotes the patient’s true event time, is available only after calendar time $\tau_i + R_i$. Let $\delta_i = \mathbf{1}\{Y_i \le C_i\}$ indicate the event status, so that $\delta_i = 1$ if the event is observed (e.g., patient experienced the event during follow-up) and $\delta_i = 0$ otherwise. Consequently, the full data for subject $i$ is given by $\mathcal{D}_i := \{\tau_i,S_i, A_i, \delta_i, R_i\}$. 

However, note that $(\delta_i, R_i)$ is not observed immediately at round $t$ when subject $i$ enters, but only after a delay. Specifically, these outcomes become available in round $t'$ if $\tau_{t'} \geq \tau_i + R_i$. To formalize this, we define a status indicator
$$\eta_{i,t} = \boldsymbol{1}\{(\delta_i,R_i) \text{ is observed by calendar time }\tau_t \}.$$
Accordingly, we define the observed data for subject $i$ at round $t$ as $\mathcal{D}_{i,t} = \{\tau_i,S_i, A_i, \eta_{i,t}\delta_i, \eta_{i,t}R_i\}$. By construction, subject $i$ enters the study at round $i$, so $\mathcal{D}_{i,t}$ is defined only for $t \ge i$. When the observation window is sufficiently long for all outcomes to be observed, we recover the full data $\mathcal{D}_i = \mathcal{D}_{i,\infty}$. %During the follow-up window $[\tau_i, \tau_i + C_i)$, the censoring indicator and reward $\{\delta_i, R_i\}$ remain unobserved. 

Let $X= \Phi(S,A)\in\mathbb{R}^{d}$ denote the collection of processed features constructed from the covariates $S$ and action $A$, which serve as the feature input to the Cox PH model. Importantly, our theoretical analysis imposes no stochastic assumptions on the covariates $S$ or the derived features $X$. This design choice provides maximal flexibility for feature engineering and decision-making in practice.

For a discrete action space with a reasonably small number of actions $K$, a natural choice for $\Phi$ is $\Phi(S,A) = (S^\top\boldsymbol{1}\{A=1\},\dots, S^\top\boldsymbol{1}\{A=K\})^\top$, so that $d =  d_0K$. Accordingly, $\beta$ can be viewed as a $d_0K$-dimensional vector, decomposable as $\beta = (\beta_1^\top, \dots, \beta_K^\top)^\top$, where $\beta_k$ corresponds to the coefficients of covariates $S$ for subjects assigned to action $k$. Alternative formulations of $\Phi$, such as incorporating polynomial expansions or interaction terms, can also be adopted to capture more flexible relationships between $S$ and $A$. 

The primary objective in most survival studies is to infer the survival function. In the bandit setting, this translates into leveraging historical data to better estimate $S(\tau \mid X(a))$, where the dependence on the action $a$ is made explicit through $X(a)$, and to identify the optimal action $a$ that maximizes $S(\tau \mid X(a))$ for each subject. More generally, the goal can be extended to optimizing a functional $f(S(\tau \mid X(a)))$ under mild Lipschitz continuity assumptions, which will be detailed in the theory section. For example, $f(S(\tau_0 \mid X(a)))$ can denote the mean restricted survival time $\text{MRST}(\tau_0 \mid X(a)) = \int_0^{\tau_0} S(\tau \mid X(a))\, d\tau$, which quantifies the expected survival time truncated at $\tau_0$. For concreteness, we focus on the survival probability at a prespecified time point $\tau_0$. This corresponds to the special case $f(x)=x$, with $S(\tau_0 \mid X(a))$ serving as the primary quantity of interest.

%a functional $f(S(\tau\mid X))$. Classical choices of $f$ include the survival function $f(S(\tau\mid X)) = S(\tau_0\mid X) = \mathbb{P}(Y > \tau_0\mid X)$ and the mean restricted survival time $\text{MRST}(\tau_0\mid X) = \int_0^{\tau_0} S(\tau\mid X) d\tau$, which quantify, respectively, the probability of surviving beyond time $\tau_0$ and the expected survival time truncated at $\tau_0$. Both metrics capture key aspects of the underlying time-to-event distribution. For illustration purpose, we focus on survival function $S(\tau_0)$ at a given time point $\tau_0$ as the primary quantity of interest. However, this formulation is general and can accommodate any $f(\cdot)$ under mild regularity conditions, which are detailed in the theoretical section.
\subsection{Optimization with Pure Exploitation}

Before introducing exploration in an online bandit setting, we first consider the case of \emph{pure exploitation}.

We begin by decomposing the information update from round $t$ to $t+1$, focusing on both the risk set $\mathcal{R}(\tau,s)$ and the partial likelihood function $l(\tau,\beta)$. 

Consider the risk set $\mathcal{R}(\tau,s)$, which depends on both calendar time $\tau$ and survival time $s$. For a fixed $\tau$, $\mathcal{R}(\tau,s)$ is a non-increasing set as $s \to \infty$, since larger $s$ makes it harder for subjects to satisfy the at-risk condition. Conversely, for a fixed $s$, $\mathcal{R}(\tau,s)$ is non-decreasing as $\tau \to \infty$. Larger calendar time increases $(\tau-\tau_i)^+$ for each subject $i$, thereby allowing subjects previously excluded by staggered entry to enter the risk set.

Based on the at-risk status change from round $t$ to $t+1$, subjects can be divided into three categories:
\begin{enumerate}
    \item $\eta_{i,t}\eta_{i,t+1}=1$: $\mathcal{D}_i$ is fully observed at round $t$;
    \item $\eta_{i,t}=0$ and $\eta_{i,t+1}=1$: $(\delta_i, R_i)$ is not available at round $t$ but becomes available at round $t+1$;
    \item $\eta_{i,t}=\eta_{i,t+1}=0$: $\mathcal{D}_i$ remains unobserved at round $t+1$.
\end{enumerate}
These three categories exhaust all possibilities, since $\eta_{i,t}=1$ and $\eta_{i,t+1}=0$ cannot occur by definition.

For group 1, the risk set does not change from $\mathcal{R}(\tau_t,s)$ to $\mathcal{R}(\tau_{t+1},s)$. For group 2, since $(\tau_t-\tau_i)^+ < R_i \le (\tau_{t+1}-\tau_i)^+$, we need to include these subjects in the updated risk set starting from round $t+1$. For group 3, although $(\delta_i,R_i)$ remain unobserved, we still have information update from them since $R_i > (\tau_{t+1}-\tau_i)^+$, which effectively enlarges the risk set for some $s$. After some calculations, the risk set update from round $t$ to $t+1$ can be written as
\begin{equation*}
\begin{aligned}
    \mathcal{R}(\tau_{t+1},s) =& \mathcal{R}(\tau_{t},s)\bigcup \big\{i\in[N]:\eta_{i,t}=0,(\tau_t-\tau_i)^+<s\leq (\tau_{t+1}-\tau_i)^+\wedge R_i\big\}.
\end{aligned}
\end{equation*}

Next, we consider the update of partial likelihood function from round $t$ to $t+1$. Recall that at calendar time $\tau$,  $l(\tau,\beta)$ is defined as
\[
\sum_{i=1}^N \int_0^\tau \Big\{ X_i^\top \beta - \log \sum_{j\in\mathcal{R}(\tau,s)} \exp(X_j^\top \beta) \Big\} N_i(\tau, ds).
\]
Since both $N(\tau,s)$ and $\mathcal{R}(\tau,s)$ change from $\tau_t$ to $\tau_{t+1}$, $l(\tau,\beta)$ must adapt accordingly to reflect the change. While $l(\tau_{t+1},\beta)$ cannot be expressed simply as $l(\tau_t,\beta)$ plus the contribution solely from subject $t+1$, an efficient online update is possible. By monitoring all subjects in groups 2 and 3 (i.e., for $\{i\in[N]:\eta_{i,t}=0\}$) whose information evolves over time, we can write
\[
l(\tau_{t+1},\beta) = l(\tau_t,\beta) + P_1 + P_2,
\]
where
$$
\begin{aligned}
P_1 & = \sum_{i=1}^N \int_0^{\tau_{t+1}} \Big\{ X_i^\top \beta - \log \sum_{j\in\mathcal{R}(\tau_{t+1},s)} \exp(X_j^\top \beta) \Big\}  \big( N_i(\tau_{t+1},ds) - N_i(\tau_t,ds) \big),
\end{aligned}
$$
and
$$
P_2 = \sum_{i=1}^N \int_0^{\tau_t} \log \frac{\sum_{j\in\mathcal{R}(\tau_{t},s)} \exp(X_j^\top \beta)}{\sum_{j\in\mathcal{R}(\tau_{t+1},s)} \exp(X_j^\top \beta)} \, N_i(\tau_t, ds).
$$
%This approach speeds up computation for large $T$, avoiding a full recalculation of the risk set and log-likelihood at each round.

\textbf{Remark. } \textit{(Computational Complexity)} \\
Let $N$ denote the total number of samples (rounds), $m$ the total number of observed events, and $p$ the dimension of the covariates. A naive approach refits the Cox proportional hazards model from scratch at each round $t$, using all $t$ accumulated samples. Since each Cox fit via Newton-Raphson requires $O(t p^2)$ time (dominated by Hessian computation), the total computational cost over $N$ rounds is $\sum_{t=1}^N O(t p^2) = O(N^2 p^2)$, which becomes prohibitively expensive as $N$ grows. 

In contrast, our method incrementally updates the partial likelihood by leveraging the decomposition $l(\tau_{t+1}, \beta) = l(\tau_t, \beta) + P_1 + P_2$, where $P_1$ accounts for contributions from newly observed events between $\tau_t$ and $\tau_{t+1}$, and $P_2$ corrects for changes in the risk sets due to staggered entry. Specifically, $P_1$ requires $O(\Delta m_t \cdot p)$ operations, where $\Delta m_t$ is the number of new events at round $t$, while $P_2$ requires $O(m_t)$ operations, where $m_t$ is the cumulative number of events up to round $t$. Summing over all rounds yields a total complexity of $O(m p + N m)$, which depends on the number of events rather than the square of the sample size (see Figure~\ref{fig:sim_runtime} in the Appendix for empirical runtime as the sample size increases). This approach avoids repeated Hessian computations and full risk set recalculations, providing computational savings when events are sparse ($m \ll N$) and enabling efficient online Cox updates for large-scale sequential decision-making.

Based on the updated partial likelihood, solving $U(\tau_t,\beta) = 0$ yields the coefficient estimator at round $t$ under pure exploitation, denoted by $\hat{\beta}_t^*$.
% Although there is no  There are two parts that need to be changed: First, for subjects in group 2), if they experienced the event ($\delta_i=1$), we need to add their partial likelihood to $l(\tau_{t+1},\beta)$; Second, for all subjects, we need to also update the risk set from $\mathcal{R}(\tau_t,s)$ to $\mathcal{R}(\tau_{t+1},s)$ to incorporate potentially new information. In short, 
% \begin{equation*}
%     l(\tau_{t+1},\beta) = l(\tau_{t},\beta) + P_1 + P_2,
% \end{equation*}
% where $P_1 = \sum_{i=1}^T \int_0^\tau \big\{X_i^\top\beta-\log \sum_{j\in\mathcal{R}(\tau_{t+1},s)}\exp(X_j^\top\beta)\big\} \big\{N_i(\tau_{t+1},ds)-N_i(\tau_{t},ds) \big\}$, and $P_2 = \sum_{i=1}^T \int_0^\tau \log \big\{\sum_{j\in\mathcal{R}(\tau_{t+1},s)}\exp(X_j^\top\beta)/\sum_{j\in\mathcal{R}(\tau_t,s)}\exp(X_j^\top\beta)\big\} N_i(\tau_{t},ds)$.
Since the goal is to maximize the survival function $S(\tau_0\mid X_t) = S_0(\tau_0)^{\exp(X_t^\top \beta)}$ using the current estimate $\hat{\beta}_t^*$, this reduces to selecting the action 
\[
\hat{a}^*_t := \arg\max_{a} S_0(\tau_0)^{\exp(X_t(a)^\top \hat{\beta}_t^*)}.
\] 
Since the baseline survival function $S_0(\tau_0) \in (0,1]$ is independent of $a$, the optimization above is equivalent to 
\[
\hat{a}^*_t = \arg\min_a X(a)^\top \hat{\beta}_t^*.
\] 
%providing a simple and efficient decision rule under pure exploitation.

\subsection{Online Learning with Exploration}
In online bandit learning, the central challenge lies in balancing exploration and exploitation. Having discussed the online case of pure exploitation, we now introduce exploration strategies and extend three classical bandit algorithms: Epsilon-Greedy (EG), Upper Confidence Bound (UCB), and Thompson Sampling (TS), to the setting of online survival analysis.

\textbf{EG.}  
Under EG exploration, the action at round $t$ is selected as
\begin{equation}\label{eq:EG_act}
a_{t}= (1-Z_{t})\cdot \arg\min_{a} {X}_{t}(a)^\top\hat{{\beta}}_{t-1}^* + Z_{t}\cdot \text{DU}(1,K),
\end{equation}
where $Z_{t}\sim \text{Ber}(\epsilon_{t})$, and $\text{DU}(1,K)$ denotes the discrete uniform distribution s.t. $\mathbb{P}(A_t=a)=\frac{1}{K}$ for any $a\in\mathcal{A}$.

\textbf{UCB.} 
Define ${\Sigma}_{t}(\beta):= I_t(\beta)^{-1}\in\mathbb{R}^{dK\times dK}$, where $I_t$ is the observed Fisher information process, given by
\begin{equation}\label{eq:I_beta}
\begin{aligned}
    I_t({\beta}) =\sum_{i=1}^N\int_0^{\tau_t} &\Big\{\sum_{j\in\mathcal{R}(\tau_t,s)}\omega_j(\tau_t,s) X_jX_j^\top-\bar{X}(\tau_t,s)\bar{X}(\tau_t,s)^\top\Big\} N_i(\tau_t,ds),
\end{aligned}
\end{equation}
where $\omega_j(\tau,s) = \frac{ \exp\{X_j(s)^\top \beta\}}{\sum_{i\in \mathcal{R}(\tau,s)} \exp\{X_i(s)^\top \beta\}}$.

Since we minimize $X(a)^\top\beta$, the upper confidence bound under $A_{t}=a\in\mathcal{A}$ can be derived as
\begin{equation}
\begin{aligned}
    \text{UCB}_{t,a} 
    \leftarrow -{X}_{t}(a)^\top\hat{{\beta}}_{t-1}^* + \alpha \cdot \sqrt{{X}_{t}(a)^\top{\Sigma}_{t-1}(\hat{{\beta}}_{t-1}^*){X}_{t}(a)},
\end{aligned}
\end{equation}      
where $\alpha$ is a hyperparameter that controls the exploration-exploitation tradeoff. UCB-based exploration selects the arm $a_{t}$ by
\begin{equation}\label{eq:UCB_act}
    a_{t}=\arg\max_{a} \text{UCB}_{t,a}.
\end{equation}

\textbf{TS.}
For a prior $\pi(\beta) =\mathcal{N}(\mu_0,\Sigma_0)$, the log-posterior of $\beta$ given data $\mathcal{D}$ is
\begin{equation*}
    \begin{aligned}
    \log f(\tau,\beta\mid\mathcal{D}) &:= l(\tau,\beta\mid\mathcal{D})+\log \pi(\beta)\\
    &= \sum_{i=1}^N \int_0^\tau \Big\{X_i^\top\beta-\log \sum_{j\in\mathcal{R}(\tau,s)}\exp(X_j^\top\beta)\Big\}N_i(\tau,ds) -\frac{1}{2}(\beta-\mu_0)^\top\Sigma_0^{-1}(\beta-\mu_0) + C,
    \end{aligned}
\end{equation*}

where $C$ is a constant independent of $\beta$. Unlike in classical bandits, the posterior here has no closed form and does not follow a standard distribution. We therefore approximate it using either MCMC (e.g., Metropolis-Hastings, Hamiltonian Monte Carlo) or Laplace approximation. In our experiments, both Hamiltonian Monte Carlo (HMC) and Laplace methods yield stable performance. Once a posterior sample $\widetilde{\beta}_{t-1}$ is drawn, we choose the action
\begin{equation}\label{eq:TS_act}
    a_{t} = \arg\min_{a} {X}_{t}(a)^\top\widetilde{\beta}_{t-1}.
\end{equation}

\section{Theory} 
In bandit learning, the objective is to select personalized actions that maximize a functional of the time-to-event distribution, such as the survival probability $S(\tau_0\mid X(a))$ or the mean restricted survival time $\mathrm{MRST}(\tau_0\mid X(a))$. The cumulative regret over $T$ rounds is defined as
\begin{equation*}
    \text{Regret}(T) = \sum_{t=1}^T \mathbb{E}[f(S(\tau_0\mid X_t(a^*_t)))-f(S(\tau_0\mid X_t(a_t)))],
\end{equation*}
where $a^*_t$ is the optimal action, $a_t$ is the action chosen by the algorithm (e.g., EG, UCB, or TS), and the expectation is taken with respect to the randomness in $X(a)$. If we set $f(x)=x$, i.e., to optimize the survival probability $S(\tau_0\mid X)$, the cumulative regret becomes
\begin{equation*}
    \begin{aligned}
    \sum_{t=1}^T \mathbb{E}\Big[{S}_0(\tau_0)^{\exp\{X_t(a^*_t)^\top{\beta}\}} - {S}_0(\tau_0)^{\exp\{X_t(a_t)^\top{\beta}\}}\Big].
    \end{aligned}
\end{equation*}
Regret arises only when the algorithm selects a suboptimal arm ($a_t\neq a^*_t$). Define $g(z) = S_0(\tau_0)^{\exp(z)}$, which is strictly decreasing and smooth in $z$ since $S_0(\tau_0)\in(0,1]$. By the Mean Value Theorem, the order of $\text{Regret}(T)$ in $T$ is the same as that of $\sum_{t=1}^T \mathbb{E}[\Delta_t]$, where $\Delta_t := X_t(a_t)^\top\beta - X_t(a^*_t)^\top\beta\geq 0$. That is,
\begin{equation*}
\text{Regret}(T) \leq \sup_z|g'(z)| \cdot \sum_{t=1}^T \mathbb{E}[\Delta_t]\leq \frac{1}{e}\sum_{t=1}^T\mathbb{E}[\Delta_t].
\end{equation*}
For a general regret defined via a function $f(S(\tau_0 \mid X(a)))$, as long as $f$ is (uniformly) Lipschitz on $[0,1]$, there exists a constant $C_0$ such that $\text{Regret}(T)\leq C_0\sum_{t=1}^T\mathbb{E}[\Delta_t]$. Hence, the main challenge reduces to establishing an upper bound for $\sum_{t=1}^T \mathbb{E}[\Delta_t]$.

For each $t$, $\mathbb{E}[\Delta_t]$ has the same form as in linear contextual bandits. However, in the survival setting, $a_t$ is chosen according to \eqref{eq:EG_act}, \eqref{eq:UCB_act}, or \eqref{eq:TS_act}, where $\hat{\beta}_t^*$ is estimated from the Cox partial likelihood $l(\tau_t,\beta)$ under staggered entry. This creates a significant challenge for establishing regret upper bounds, since the martingale properties of the score function no longer hold. We will detail it shortly.

Following classical regret decompositions, the total regret splits into two components:
1) Exploration regret, governed by the choice of algorithm (EG, UCB, or TS); 2) Exploitation regret, arising from imperfect estimation of $\beta$ due to finite samples. The latter is more challenging in online survival analysis, as it requires constructing a confidence ellipsoid, i.e., a uniform bound on $\hat{\beta}^*_t - \beta$ over time. Let $\mathcal{F}_{\tau}$ denote the $\sigma$-algebra generated by ${\mathcal{D}_i}$ observed up to calendar time $\tau\in[0,\infty)$. Ideally, if the score process $U(\tau,\beta)$ in Equation~\eqref{eq:score_func} were a martingale in $\tau$, standard martingale concentration tools would apply. However, under staggered entry where two time scales $(\tau,s)$ are involved, and $U(\tau,\beta)$ is no longer a martingale due to the dependency of $\bar{X}(\tau,s)$ on $\tau$. An informal law of large numbers suggests that when $\mathcal{R}(\tau,s)$ is large, $\bar{X}(\tau,s)$ is approximately 
\begin{equation*}
    \mu(s) = \frac{\mathbb{E}\big[X\exp\{X^\top\beta\};Y\wedge C\geq s\big]}{\mathbb{E}\big[\exp\{X^\top\beta\};Y\wedge C\geq s\big]}.
\end{equation*}
Motivated by \cite{sellke1983sequential}, we instead define an adjusted score process
\begin{equation*}
    Q(\tau) = \sum_{i=1}^N \int_0^\tau [X_i-\mu(s)]\{N_i(\tau,ds)-A_i(\tau,ds)\},
\end{equation*}
where $A_i(\tau,s) = \Lambda_i\{s \wedge Y_i \wedge C_i \wedge (\tau-\tau_i)^+\}$ is the compensator, i.e., the expected event rate given the history. Unlike $U(\tau,\beta)$, $Q(\tau)$ is a martingale in $\tau$, allowing us to build concentration bounds. The confidence ellipsoid can then be constructed by bounding both $Q(\tau)$ and the deviation between $Q(\tau)$ and $U(\tau,\beta)$.

Before proceeding, we introduce the assumptions required to establish the regret upper bound.
\begin{assumption}\label{assump:1} (Boundedness)
\begin{itemize}
    \item[a.] $X_t(a)$ is uniformly bounded by a constant $L$, i.e., $\|X_t(a)\|_2\leq L$ for all $a\in\mathcal{A}$ and $t\in[T]$.  
    \item[b.] There exists a large constant $ Y_{\max}$, such that $|Y_t|\leq Y_{\max}$ uniformly holds for all $t\in[T]$.
    \item[c.] The minimum eigenvalue of $\tilde{I}_t$ satisfies $\lambda_{\min}(\tilde{I}_t) \geq \kappa t/d$ for some $\kappa>0$, where $\tilde{I}_t := \sum_{i=1}^N\int_0^{\tau_t} \{X_i-\mu(s)\}\{X_i-\mu(s)\}^\top A_i(\tau_t,ds)$ denotes the adjusted information matrix. 
\end{itemize}
\end{assumption}
Assumptions \ref{assump:1}.a and \ref{assump:1}.b are standard boundedness conditions commonly adopted in bandit and survival analysis settings \citep{chu2011contextual, agrawal2013thompson, filippi2010parametric}. Assumption \ref{assump:1}.c requires the information matrix to grow at least linearly in $t$, which is conceptually analogous to the \emph{persistence of excitation} condition and ensures sufficient exploration of the covariate space over time. A simple set of sufficient conditions (for intuition) is
\begin{enumerate}
    \item[(1)] $\mathbb{E}\!\left[\{X_i - \mu(s)\}\{X_i - \mu(s)\}^\top\right] \ge \lambda_0 I$,
    \item[(2)] $\Lambda(t) := \sum_{i=1}^n \int_0^t A_i(t, ds) \ge \eta t/d$,
\end{enumerate}
where $\Lambda(t)$ is the cumulative intensity.

Condition (1) guarantees that covariates within each risk set exhibit sufficient variability in all directions, which is analogous to the minimum-eigenvalue condition on the feature covariance matrix commonly assumed in the linear bandits literature \citep{abbasi2011improved, li2017provably, ding2021efficient, han2021generalized}. Condition (2) requires the expected number of failure events to grow linearly with time so that information accumulates, which is a requirement needed specifically in survival setting with right censoring. %When censoring is heavy and events are sparse, this condition may be violated, which is expected since online survival models only update upon observing failures.

More generally, if the accumulation of failure events grows sublinearly in reality, Condition (2) can be relaxed to $\Lambda(t) \ge \eta t^\alpha/d$ for any $\alpha > 0$, at the cost of a weaker regret bound of $\tilde{O}(\sqrt{d}\,T^{1-\alpha/2})$, which remains sublinear in $T$. A detailed discussion and proof are provided in Appendix~\ref{appendix:assump}.

We now state the confidence ellipsoid bound for online survival coefficient estimate.

\begin{theorem} (Confidence Ellipsoid) \label{thm:1}
Suppose Assumption~\ref{assump:1}.a holds, and $\Delta Q(t):=Q(\tau)-Q(\tau-)$ is conditionally sub-Gaussian. For any $\delta>0$, with probability at least $1-\delta$,
\begin{equation*}
    \|\hat{\beta}_t^*-\beta\|_{I_{t}} =\|U(\tau_t,\beta)\|_{I_t^{-1}} \leq   O(\sqrt{d\log (4tL^2/d)-2\log \delta})
\end{equation*}
holds uniformly for all $t\in[T]$. 
\end{theorem}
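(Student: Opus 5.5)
The argument has three parts: reduce the estimation error to a weighted norm of the score, bound the score's deviation from the martingale $Q$, and apply a self-normalized concentration bound to $Q$.

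\textbf{Step 1: linearize the score.} Since $U(\tau_t,\hat\beta_t^*)=0$, the mean value theorem applied to the score gives
\[
U(\tau_t,\beta)=\bar I_t(\hat\beta_t^*-\beta),
\qquad
\bar I_t=\int_0^1 I_t\big(\beta+v(\hat\beta_t^*-\beta)\big)\,dv .
\]
Assumption~\ref{assump:1}.a bounds the covariates by $L$, so the Cox log-partial likelihood is self-concordant in the generalized sense. Hence $\bar I_t$ and $I_t(\beta)$ are equivalent up to a constant factor once $\|\hat\beta_t^*-\beta\|$ is controlled. This step can use a standard localization argument, or simply replace $I_t$ by $\bar I_t$ and absorb the constant into the $O(\cdot)$. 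It gives $\|\hat\beta_t^*-\beta\|_{I_t}\asymp\|U(\tau_t,\beta)\|_{I_t^{-1}}$, which is the identity stated in Theorem~\ref{thm:1} up to constants. It therefore suffices to bound $\|U(\tau_t,\beta)\|_{I_t^{-1}}$ uniformly in $t$.

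\textbf{Step 2: decompose $U$.} Write $U(\tau_t,\beta)=Q(\tau_t)+D(\tau_t)$ with
\[
D(\tau)=\sum_i\int_0^\tau\{\mu(s)-\bar X(\tau,s)\}\,N_i(\tau,ds).
\]
The compensator contribution $\sum_i\int\{X_i-\bar X(\tau,s)\}A_i(\tau,ds)$ vanishes identically by the definition of $\bar X$ as the $\exp(X^\top\beta)$-weighted risk-set average. This cancellation is exactly the Sellke--Siegmund trick.

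\textbf{Step 3: concentrate $Q$.} $Q(\tau)$ is a vector martingale in calendar time with conditionally sub-Gaussian increments, and its predictable quadratic variation is $\tilde I_t$. I will apply the self-normalized bound of Abbasi-Yadkori et al.\ (method of mixtures with a Gaussian prior of precision $\lambda I$). This yields, with probability $1-\delta/2$ and uniformly over all $t$,
\[
\|Q(\tau_t)\|^2_{(\tilde I_t+\lambda I)^{-1}}\le 2\log\frac{\det(\tilde I_t+\lambda I)^{1/2}}{\lambda^{d/2}\,\delta/2}.
\]
Because $\|X_i-\mu(s)\|\le 2L$ and at most $t$ subjects contribute, $\operatorname{tr}\tilde I_t\le 4tL^2$, and AM--GM gives $\log\det\le d\log(\lambda+4tL^2/d)$. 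This produces the $\sqrt{d\log(4tL^2/d)-2\log\delta}$ rate.

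\textbf{Step 4 (main obstacle): control $D$ and change the norm.} Two things remain.
\begin{itemize}
\item[(i)] Bound $D$, which is a drift term arising because $\bar X(\tau,s)$ depends on $\tau$ through the staggered entry. Write
\[
D=\sum_i\int(\mu-\bar X)(dN_i-dA_i)+\sum_i\int(\mu-\bar X)\,dA_i .
\]
The first piece is again a martingale, since $\bar X(\tau,s)$ is predictable, and is handled as in Step 3. For the second piece, use $\sum_i dA_i(\tau,s)=\sum_{j\in\mathcal R}\exp(X_j^\top\beta)\lambda_0(s)\,ds$; multiplying by $\mu-\bar X$ gives a weighted sum of $X_j-\mu$ over the risk set. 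I plan to show that this piece has the same order as $Q$ itself in the $\tilde I_t^{-1}$ norm, via a uniform (over $s$) Bernstein/chaining bound on $\bar X(\tau,s)-\mu(s)$ of order $\sqrt{\log(t/\delta)/|\mathcal R(\tau,s)|}$. If that is too delicate, I will fall back on bounding $D$ crudely by a term dominated by the same $\log\det$ quantity.
\item[(ii)] Compare the norms: show $I_t\succeq c\,\tilde I_t$ up to lower-order terms. Both are quadratic forms centered respectively at $\bar X$ and $\mu$, integrated against $N$ and $A$. Using $N=A+(N-A)$ and a matrix martingale (Freedman) inequality gives $I_t\succeq \tilde I_t/2-O(\sqrt{t\log(d/\delta)})$, so $\|\cdot\|_{I_t^{-1}}\lesssim\|\cdot\|_{\tilde I_t^{-1}}$ with a constant; here taking $\lambda$ of constant order is harmless.
\end{itemize}
A union bound over the two events, each with probability $\delta/2$, gives the stated result.
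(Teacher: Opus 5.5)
Your Steps 1 and 3 follow the paper's route: linearize the score, then apply a method-of-mixtures bound to $Q$ and AM--GM to $\det\tilde I_t$. Your proper Gaussian prior is in fact cleaner than the paper's flat one. The gap is in Steps 2 and 4, which is where the paper does its real work.

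First, the decomposition is wrong. Since $\sum_i\int\{X_i-\bar X(\tau,s)\}A_i(\tau,ds)=0$, you have $U(\tau,\beta)=\sum_i\int\{X_i-\bar X(\tau,s)\}\{N_i(\tau,ds)-A_i(\tau,ds)\}$, and therefore
\[
U(\tau,\beta)-Q(\tau)=\sum_{i=1}^N\int_0^\tau\{\mu(s)-\bar X(\tau,s)\}\{N_i(\tau,ds)-A_i(\tau,ds)\}.
\]
This is the paper's $r(\tau)$ up to sign. Your $D=\sum_i\int(\mu-\bar X)\,N_i(\tau,ds)$ omits the term $\sum_i\int\{X_i-\mu(s)\}A_i(\tau,ds)=-\sum_i\int(\mu-\bar X)\,A_i(\tau,ds)$, so $U\neq Q+D$. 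The drift piece you single out as the main obstacle actually cancels and never needs to be bounded. That is just as well: with adaptively chosen covariates and no stochastic assumptions on them, a uniform concentration bound for $\bar X(\tau,s)-\mu(s)$ would be hard to justify.

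Second, and more seriously, the true remainder is exactly the piece you dismiss as ``again a martingale, since $\bar X(\tau,s)$ is predictable''. It is not a martingale in calendar time, because $\bar X(\tau,s)$ depends on $\tau$. Later entrants join $\mathcal{R}(\tau,s)$ at survival times $s$ that have already been integrated over, so the integrand changes retroactively as $\tau$ grows. Predictability in $s$ for fixed $\tau$ gives only an $\mathcal{F}_{\tau,s}$-martingale in $s$. The calendar-time supermartingale and stopping-time argument of your Step 3 therefore does not transfer. Indeed, if your claim held, $U$ itself would be a calendar-time martingale and $Q$ would be pointless.

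The paper closes this step by invoking Theorem 1 of Sellke and Siegmund, $r(\tau)=O_p(1)+O_p(D(\tau)^{1/2-\epsilon})$ uniformly in $\tau$, and then argues that $\|r(\tau_t)\|_{I_t^{-1}}=o(1)$. You need that result or a genuine substitute. One option is a per-round bound on $r(\tau_t)$ from the $s$-martingale structure at level $\delta/(2t^2)$, followed by a union bound over rounds. That yields a Euclidean bound of order $\sqrt{t}$ up to logarithms. Converting it into the target rate in the $I_t^{-1}$-norm, however, requires $\lambda_{\min}(I_t)\gtrsim t/d$, i.e.\ something like Assumption~\ref{assump:1}.c, which the theorem does not assume.

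The same $\tau$-dependence undermines your matrix-Freedman comparison in 4(ii). Write $V(\tau,s)=\sum_{j\in\mathcal{R}(\tau,s)}\omega_j(\tau,s)\{X_j-\bar X(\tau,s)\}\{X_j-\bar X(\tau,s)\}^\top$. Then $I_t=\int V\,N(\tau_t,ds)$, while $\tilde I_t=\int\{V+(\bar X-\mu)(\bar X-\mu)^\top\}\,A(\tau_t,ds)$. So $I_t-\int V\,A(\tau_t,ds)$ is again only an $s$-martingale, and the centering term $(\bar X-\mu)(\bar X-\mu)^\top$ is never controlled by your argument.
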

The proof of Theorem \ref{thm:1} is provided in Section \ref{appendix:thm1_proof} of the Supplementary Material.
Next, we establish the main theoretical result of the paper.

\begin{theorem} (Regret Bound)\label{thm:2}
    Under Assumption \ref{assump:1}, the regret upper bound for online survival analysis under either EG, UCB, or TS algorithm for exploration, is given by
    \begin{equation*}
        \text{Regret}(T) = \tilde{O}(d\sqrt{T}).
    \end{equation*}
\end{theorem}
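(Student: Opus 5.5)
The plan is to reduce the regret to the per-round gaps $\Delta_t = X_t(a_t)^\top\beta - X_t(a_t^*)^\top\beta$. As shown above, $\text{Regret}(T)\le C_0\sum_t \mathbb{E}[\Delta_t]$. We then condition on the good event $\mathcal{E}_\delta$ of Theorem~\ref{thm:1}, on which
\[
\|\hat\beta^*_{t-1}-\beta\|_{I_{t-1}} \le \gamma_T := O\Big(\sqrt{d\log(4TL^2/d)+2\log(1/\delta)}\Big)
\]
for all $t\le T$ simultaneously. Off this event, $\Delta_t\le 2L\|\beta\|$ is bounded, so the contribution is at most $2L\|\beta\| T\delta$. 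Choosing $\delta=1/T$ makes it $O(1)$. On $\mathcal{E}_\delta$ we then treat each algorithm separately, but every case routes through the same elliptical quantity $\|X\|_{I_{t-1}^{-1}}$.

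\textbf{Bounding the elliptical norm via Assumption~\ref{assump:1}.c.} The key deterministic step is to lower bound $I_{t-1}$ (evaluated at $\hat\beta^*_{t-1}$, or at $\beta$) by a multiple of $\tilde I_{t-1}$. We would write $I_t-\tilde I_t$ as a martingale part, coming from $N_i-A_i$, plus the $\bar X(\tau,s)$ versus $\mu(s)$ deviation already controlled in the proof of Theorem~\ref{thm:1}. A matrix Freedman/Azuma bound then gives $\|I_t-\tilde I_t\|_{op}=\tilde O(L^2\sqrt{t})$. Switching the evaluation point from $\beta$ to $\hat\beta^*_{t-1}$ costs only a constant factor, by the self-concordance-type bound $I_t(\beta')\succeq e^{-2L\|\beta'-\beta\|}I_t(\beta)$ together with consistency. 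Hence, for $t\ge t_0=\tilde O(d^2/\kappa^2)$, we have $\lambda_{\min}(I_{t-1})\ge \kappa t/(2d)$. It follows that $\|X\|_{I_{t-1}^{-1}}\le L\sqrt{2d/(\kappa t)}$, and the first $t_0$ rounds contribute $\tilde O(d^2)$ regret. I expect this step to be the main obstacle: the staggered-entry risk set makes $I_t$ non-martingale in $\tau$, so the concentration of $I_t$ around $\tilde I_t$ must reuse the $Q$ versus $U$ decomposition from Theorem~\ref{thm:1}, now at the matrix level. If that route fails, the fallback is to assume the eigenvalue condition directly on $I_t$.

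\textbf{Per-algorithm bounds.} For \emph{UCB}, with $\alpha\ge\gamma_T$ the standard optimism argument gives
\[
\Delta_t\le 2\alpha\|X_t(a_t)\|_{I_{t-1}^{-1}}\le 2\alpha L\sqrt{2d/(\kappa t)}.
\]
Summing, $\sum_t \Delta_t = O(\gamma_T\sqrt{dT})=\tilde O(d\sqrt T)$. For \emph{EG} with $\epsilon_t\asymp \min\{1,d/\sqrt t\}$, the exploration rounds cost $O(\sum_t\epsilon_t)=O(d\sqrt T)$. On greedy rounds, since $a_t$ minimizes $X_t(a)^\top\hat\beta^*_{t-1}$, Cauchy--Schwarz in the $I_{t-1}$ geometry gives
\[
\Delta_t\le 2\max_a\|X_t(a)\|_{I_{t-1}^{-1}}\gamma_T=\tilde O\big(d/\sqrt t\big),
\]
which sums to $\tilde O(d\sqrt T)$. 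Assumption~\ref{assump:1}.c guarantees the eigenvalue growth without relying on forced exploration. For \emph{TS}, we follow Agrawal--Goyal. With the Laplace posterior $\mathcal N(\hat\beta^*_{t-1},v^2I_{t-1}^{-1})$ and $v=\gamma_T$, a Gaussian tail bound gives $\|\tilde\beta_{t-1}-\hat\beta^*_{t-1}\|_{I_{t-1}}\le v\sqrt{2d\log(KT^2)}$ with probability $1-1/T^2$. Combined with the anti-concentration (optimism with constant probability) argument and the elliptical bound above, this yields $\tilde O(d\sqrt{dT})$ in general. Because the eigenvalue lower bound holds deterministically here, the cruder step $\Delta_t\le 2\max_a\|X_t(a)\|_{I_{t-1}^{-1}}\big(\gamma_T+v\sqrt{2d\log(KT^2)}\big)$ suffices, and after absorbing logarithmic factors this gives $\tilde O(d\sqrt T)$. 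Adding the burn-in, the failure-event, and the per-round sums then establishes Theorem~\ref{thm:2}.
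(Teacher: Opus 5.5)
Your UCB and EG cases are sound and follow essentially the paper's route. They rest on the optimism decomposition $\Delta_t\le 2\alpha\|X_t(a_t)\|_{I_{t-1}^{-1}}$, the radius $\gamma_T=O(\sqrt{d\log T})$ from Theorem~\ref{thm:1}, and the per-round bound $\|X\|_{I_{t-1}^{-1}}^2\lesssim dL^2/(\kappa t)$ from Assumption~\ref{assump:1}.c. Your explicit failure-event accounting, and the burn-in plus matrix-concentration transfer from $\tilde I_{t-1}$ to $I_{t-1}$, are more careful than the paper's bare claim $\|I_t-\tilde I_t\|=o_p(1)$; you rightly flag that transfer as unverified. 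Choosing $\epsilon_t\asymp d/\sqrt t$ instead of the paper's $c/t$ is immaterial. For the burn-in, note that $\min\{t_0,T\}\Delta_{\max}=\tilde O(d\sqrt T)$ because $\min\{d^2,T\}\le d\sqrt T$; a bare $\tilde O(d^2)$ is not by itself $\tilde O(d\sqrt T)$.

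The genuine gap is in the TS case. You sample from $\mathcal{N}(\hat\beta^*_{t-1},v^2I_{t-1}^{-1})$ with $v=\gamma_T\asymp\sqrt{d\log T}$. You then control the whole perturbation by $\|\tilde\beta_{t-1}-\hat\beta^*_{t-1}\|_{I_{t-1}}\le v\sqrt{2d\log(KT^2)}$, which is of order $d\log(KT)$. Feeding this into your crude step with $\max_a\|X_t(a)\|_{I_{t-1}^{-1}}\le L\sqrt{2d/(\kappa t)}$ gives $\Delta_t\lesssim d^{3/2}\log(KT)/\sqrt t$, hence $\sum_t\Delta_t=\tilde O(d^{3/2}\sqrt T)$. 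The surplus $\sqrt d$ is polynomial and cannot be absorbed as a logarithmic factor. The deterministic eigenvalue bound does not remove it: it only replaces the elliptical-potential lemma, which already gives the same $\sqrt{dT}$ order. It says nothing about the size of the sampling perturbation, and that perturbation is exactly the source of the extra $\sqrt d$ in Agrawal--Goyal. So as written, your TS argument proves $\tilde O(d^{3/2}\sqrt T)$, not the theorem.

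The inflated sampler is also not the paper's TS, which draws from the Laplace/Bernstein--von Mises posterior with covariance $I_{t-1}^{-1}$. The paper writes $\tilde\beta_t=\hat\beta^*_t+I_t^{-1/2}Z_t+R_t$ and applies Laurent--Massart with a union bound over $t\le T$. This gives $\|\tilde\beta_t-\hat\beta^*_t\|_{I_t}=O(\sqrt d+\sqrt{\log(T/\delta)})+O(1)$, the same order as $\gamma_T$, and with that your crude step yields $\tilde O(d\sqrt T)$. Alternatively, you can keep $v=\gamma_T$ but replace the $d$-dimensional norm bound by scalar Gaussian tails. Use $|X_t(a)^\top(\tilde\beta_{t-1}-\hat\beta^*_{t-1})|\le v\sqrt{2\log(2KT^2)}\,\|X_t(a)\|_{I_{t-1}^{-1}}$ for all $K$ arms and $T$ rounds via a union bound; each $X_t(a)$ is fixed before the draw, so this is valid. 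That drops the $\sqrt d$ and gives $O(d\sqrt{T\log T\log(KT)})$.
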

\noindent\textbf{Remark.}
The regret order $\tilde{O}(\cdot)$ suppresses logarithmic factors in $T$. These logarithmic terms differ slightly among EG, UCB, and TS, with discrepancies of at most an additional $O(\sqrt{\log T})$. Detailed proofs are provided in Appendix~\ref{appendix:thm2_proof}. Note that the dimension $d$ implicitly incorporates the number of arms. In the special case where $\Phi(S,A) = (S^\top\boldsymbol{1}\{A=1\}, \dots, S^\top\boldsymbol{1}\{A=K\})^\top$, we have $d = d_0 K$, and the regret bound is thus $\tilde{O}(d_0 K \sqrt{T})$. Despite the additional challenges introduced by survival analysis, including right censoring and staggered entry, this regret rate is comparable to that of classical linear contextual bandits \citep{abbasi2011improved,li2017provably}.

\section{Online Analysis on SEER Dataset}

The SEER (Surveillance, Epidemiology, and End Results) database\footnote{\url{https://seer.cancer.gov/}} is a comprehensive, population-based cancer registry that collects and publishes data on cancer incidence, prevalence, survival, and treatment across diverse regions in the United States. It covers dozens of major cancer types, including breast, lung, and bladder cancer. In this study, we focus on patients with malignant breast cancer, which has been widely investigated in prior research \citep{chen2007prognostic,grover2017survival,lin2019incidence}.

We restrict our analysis to patients diagnosed with malignant neoplasms between 2004 and 2015. The final cohort consists of $290{,}091$ patients, characterized by covariates including age, marital status, CS tumor size, race, Estrogen Receptor (ER) status, and Progesterone Receptor (PR) status. Following primary data preprocessing and dummy-variable encoding, we define $S_i$ as the $8$-dimensional covariate vector for patient $i$. Eligible patients were treated with either no surgery, breast-conserving surgery (BCS), or mastectomy. Given the small proportion of patients who did not receive surgery, which can induce instability in bandit learning, we focus on a binary treatment setting with $\mathcal{A}=\{0,1\}$, where BCS is encoded as $A=0$ and mastectomy as $A=1$. Note that while this application considers a binary action space, our proposed methodology generalizes naturally to finite discrete action spaces.

For each patient, the SEER dataset provides both the year of diagnosis and the time from diagnosis to treatment. We sum these to determine the calendar time at which the treatment actually occurs, denoted by $\tau_i$. The dataset also reports the year of follow-up, which we use to compute the follow-up duration $C_i$ as the difference between the follow-up year and $\tau_i$. SEER records survival time in months, truncated by right censoring. For simplicity, we assume that both the year of diagnosis and the year of follow-up occur in January (the first month), so that all time measurements are expressed on a monthly scale.

Figure~\ref{fig:event_rate} illustrates the event rate in the SEER data. We observe that the cumulative number of events grows approximately as $O(t^{0.7})$, corresponding to $\alpha \approx 0.7$ in Condition (2) of the extension of Assumption~\ref{assump:1}.c, which in turn implies a regret upper bound of approximately $\tilde{O}(T^{0.65})$.

\begin{figure}[tbh]
    \centering
    \includegraphics[width=0.6\linewidth]{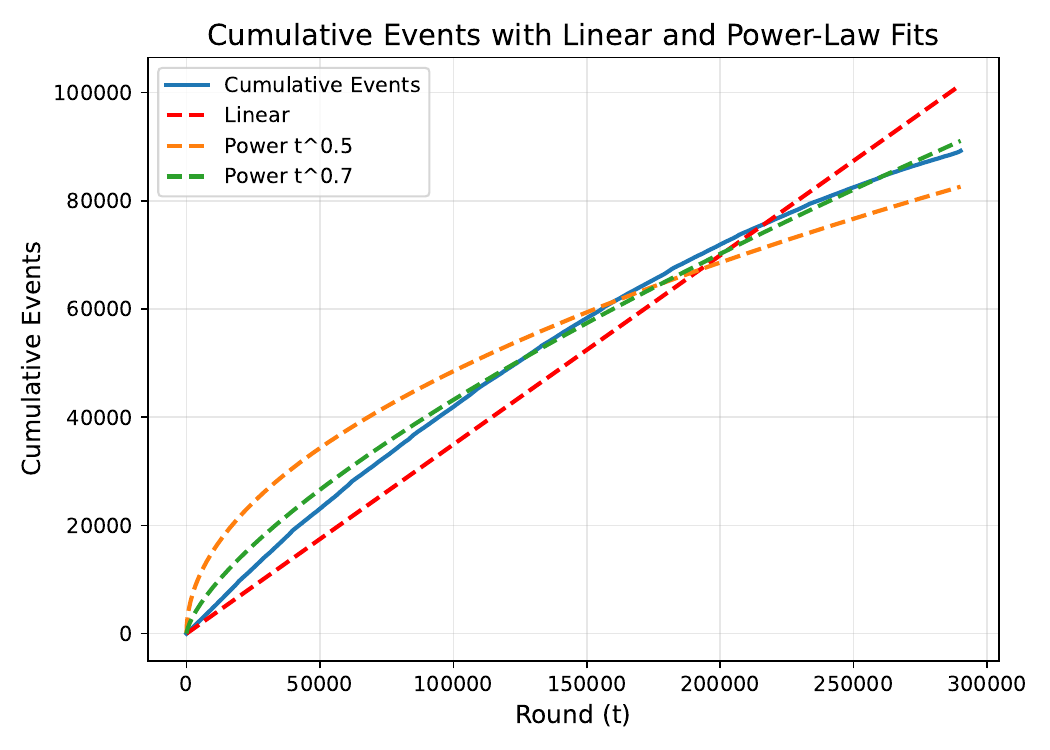}
    \caption{Event (death) rate over time in SEER data}
    \label{fig:event_rate}
\end{figure}

From 2004 to 2015, we obtain $T = 153$ rounds at a monthly resolution, excluding only a few months without new diagnoses. To evaluate bandit learning using logged observational data, we designate an initial exploration period consisting of the first several months during which the cumulative number of deaths across all patients is less than $n_0 = 500$. After this burn-in period, we apply one of three bandit algorithms: EG, UCB, or TS, to select an action $A_i$ for each patient $i$ observed in round $t$. Given the high data volume, thousands of patients may arrive in the same round, so the bandit procedure is naturally adapted to batch updates. %We retain a patient for training only if the bandit’s selected action matches the observed treatment. Otherwise, the patient is excluded due to unobserved counterfactual outcomes.

Upon completion of the bandit learning procedure, the retained dataset is used to fit a Cox PH model, from which we obtain an estimate of the regression coefficients $\beta$. We use this estimate as a reference model to define patient-specific optimal actions $A_i^{*}$ and the corresponding survival probabilities
\[
S(\tau_0\mid X) = \mathbb{P}(Y > \tau_0\mid X), \quad \tau_0 \in \{25, 50, 75, 100\},
\] 
under both the optimal action $A_i^{*}$ and the bandit-selected action ${A}_i$. We calculate the average of $\hat{S}(\tau_0)$ over all samples appeared from the beginning to the current stage (excluding a burning round of the first 3 months), with the resulting performance across algorithms summarized in Figure~\ref{fig:SEER_all}. %For brevity, we report only the average survival probabilities guided by bandit learning for $\tau_0 \in \{25, 100\}$ in the main paper. The complete set of results is provided in Figure~\ref{fig:SEER_all} in appendix.

\begin{figure*}[htb]
    \centering

    \begin{subfigure}{0.44\linewidth}
        \includegraphics[width=\linewidth]{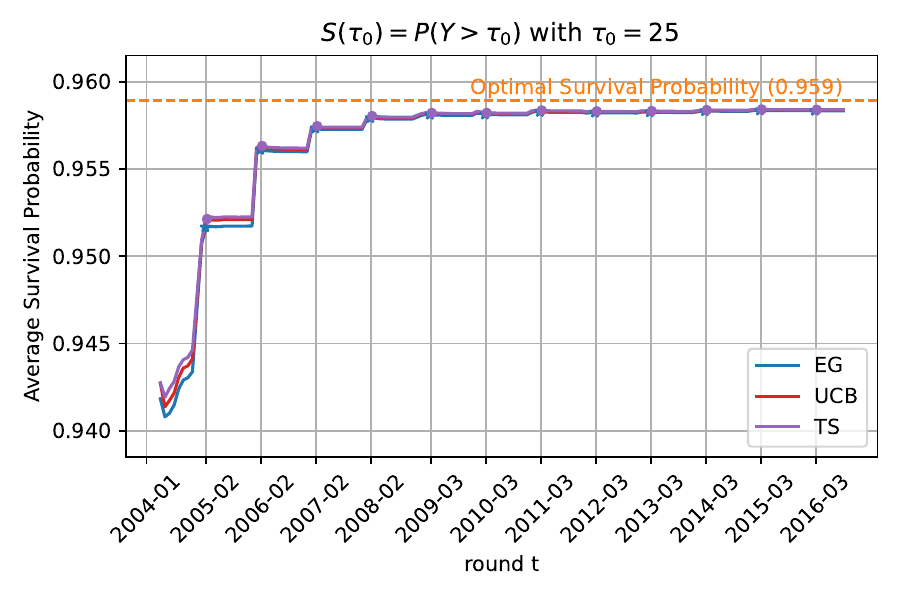}
        \caption{$\tau_0 = 25$}
    \end{subfigure}
    \hspace{0.07\textwidth}
    \begin{subfigure}{0.44\linewidth}
        \includegraphics[width=\linewidth]{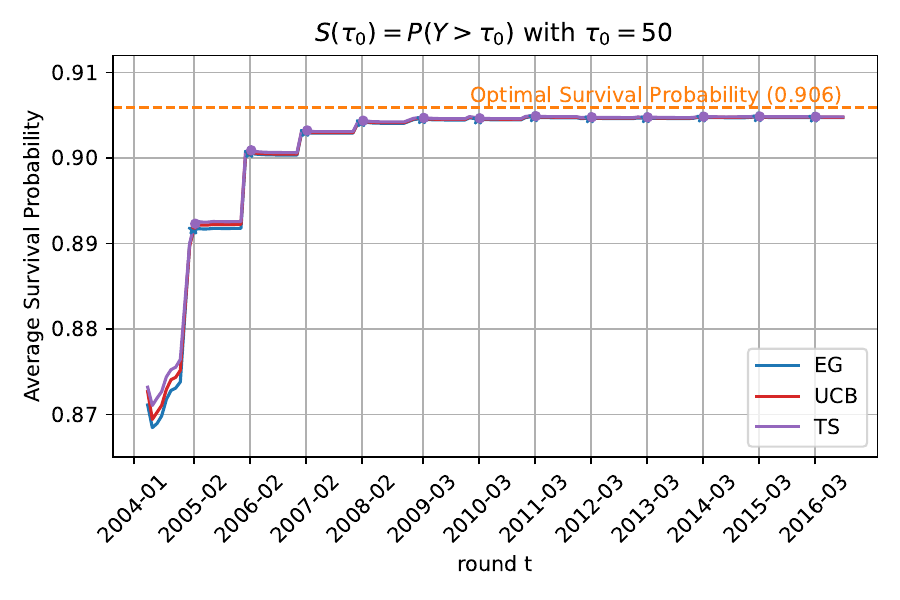}
        \caption{$\tau_0 = 50$}
    \end{subfigure}
    
    \medskip

    \begin{subfigure}{0.44\linewidth}
        \includegraphics[width=\linewidth]{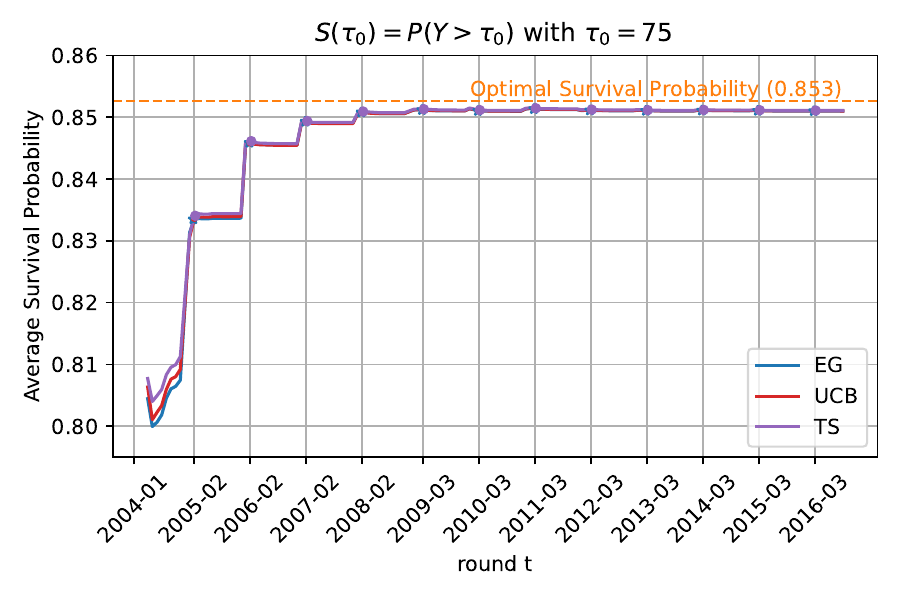}
        \caption{$\tau_0 = 75$}
    \end{subfigure}
    \hspace{0.07\textwidth}
    \begin{subfigure}{0.44\linewidth}
        \includegraphics[width=\linewidth]{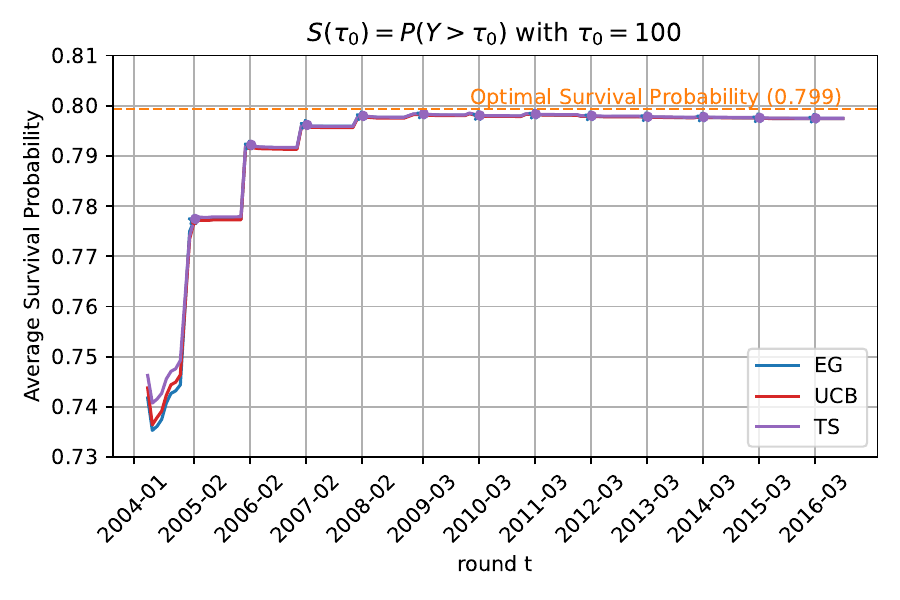}
        \caption{$\tau_0 = 100$}
    \end{subfigure}

    \caption{The average survival probability $S(\tau_0)$ under different $\tau_0 \in \{25, 50, 75, 100\}$ for our proposed algorithms.}
    \label{fig:SEER_all}
\end{figure*}

As shown in the figure, all algorithms quickly learn the underlying survival patterns as time progresses and converge toward the optimal survival probability $S(\tau_0)$. The large fluctuations observed, particularly at the beginning of each year, are primarily driven by the sudden increase in the number of available samples concentrated in specific months. It is important to note that this pattern may not perfectly reflect reality, since the exact month of diagnosis is not recorded in the version of the SEER dataset we used. Overall, the bandit learning algorithms exhibit a consistent and rapid convergence, demonstrating the practical applicability of our approach to online survival analysis scenarios.

Although the Cox proportional hazards model is widely used, the true data-generating process may deviate from the proportional hazards assumption. To assess robustness under model misspecification, we fit a Random Survival Forest (RSF) model \citep{ishwaran2008random} on the SEER data and treat it as an alternative reference model. The results for $\tau_0 = 100$ are presented in Figure~\ref{fig:seer_model_misspecification}. While the bandit algorithms (EG, UCB, and TS) are still trained under the Cox PH framework, their performance is evaluated against survival probabilities derived from the RSF model. As shown in the figure, although a modest gap remains relative to the oracle policy based on RSF (which is consistent with model misspecification), the online survival bandit algorithms continue to improve the average survival probability over time. Nevertheless, the Cox PH-based updates remain stable and exhibit robust performance under misspecification.

\begin{figure}
    \centering
    \includegraphics[width=0.6\linewidth]{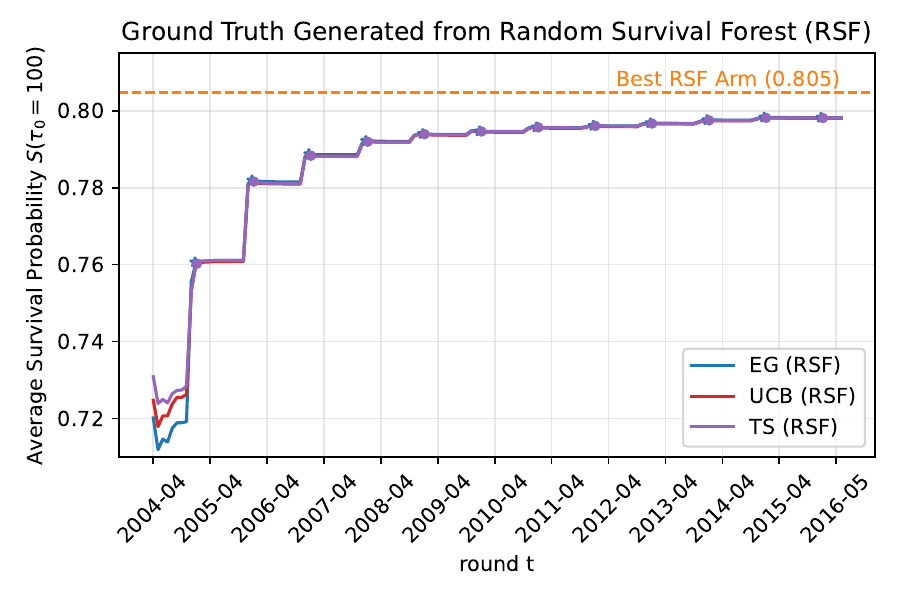}
    \caption{Performance on SEER data with ground truth generated from Random Survival Forest (RSF)}
    \label{fig:seer_model_misspecification}
\end{figure}

\section{More on Simulations}\label{sec:sim}

In this section, we present additional simulation results evaluating the performance of the proposed algorithm in approximating the true parameter $\beta$ and in driving the cumulative reward toward the optimum. As work in online bandit frameworks for survival data with right censoring remains limited, we focus primarily on comparing the three proposed exploration strategies: EG, UCB, and TS.

We also consider a naive baseline that, at each round, estimates $\hat{\beta}_t$ using only individuals whose failure or censoring status has already been observed. This approach is inherently biased, as individuals with shorter survival times are more likely to have their outcomes observed earlier. For completeness, the results for this baseline are reported in Appendix~\ref{appendix:naive}.

We consider $T=500$ rounds, where the gap time between two consecutive rounds follows a $\text{Poisson}(\lambda)$ distribution with $\lambda=1$. At each round $t$, a covariate vector $S_t = (S_{t,0}, S_{t,1}, S_{t,2})^\top \in \mathbb{R}^3$ is generated with $S_{t,0} \sim \text{Unif}(1,4)$, $S_{t,1} \sim \mathcal{N}(3,1)$, and $S_{t,2} \sim \mathcal{N}(2,1)$.

We consider a binary action space $\mathcal{A}=\{0,1\}$ and define the feature mapping as $X_t = \Phi(S_t,A_t) = \big(S_t^\top \boldsymbol{1}\{A_t=0\}, \; S_t^\top \boldsymbol{1}\{A_t=1\}\big)^\top$. The censoring time is generated from an exponential distribution, $C_t \sim \text{Exp}(\tau)$ with $\tau=5$. We define the true hazard function as $\lambda(\tau \mid X) = \lambda_0(\tau)\exp(X^\top \beta)$ where $\lambda_0(\tau) = 1$, and the true coefficient vector $\beta = (0.5, -0.3, -0.2, \; 0.2, 0.6, -0.1)^\top$.
The survival time is then generated by
\[
Y_t = \Lambda_0^{-1}\!\left(-\frac{\log(U)}{\exp(X^\top\beta)}\right), 
\quad U \sim \text{Unif}(0,1),
\]
where $\Lambda_0(\tau) = \int_0^\tau \lambda_0(u)\,du = \tau$. The censoring indicator is thus generated by $\delta_t = \boldsymbol{1}\{Y_t 
\leq C_t\}$.

This data-generating process is repeated $B=100$ times. In each replication, we estimate $\hat{\beta}_t$ at each round, record both the algorithm-recommended action $A_t$ and the optimal action $A_t^*$, and compute (a) the mean squared error (MSE) of $\hat{\beta}_t^*$, and (b) the estimated average survival probability $\hat{S}(\tau_0) = \sum_t S(\tau_0\mid X_t)/T$ with $\tau_0=1$. The results are reported in Figure~\ref{fig:sim}, where the solid lines represent the mean across $B=100$ replications and the shaded areas denote confidence intervals.

\begin{figure*}[tbh]
    \centering

    \begin{subfigure}{0.46\linewidth}
        \includegraphics[width=\linewidth]{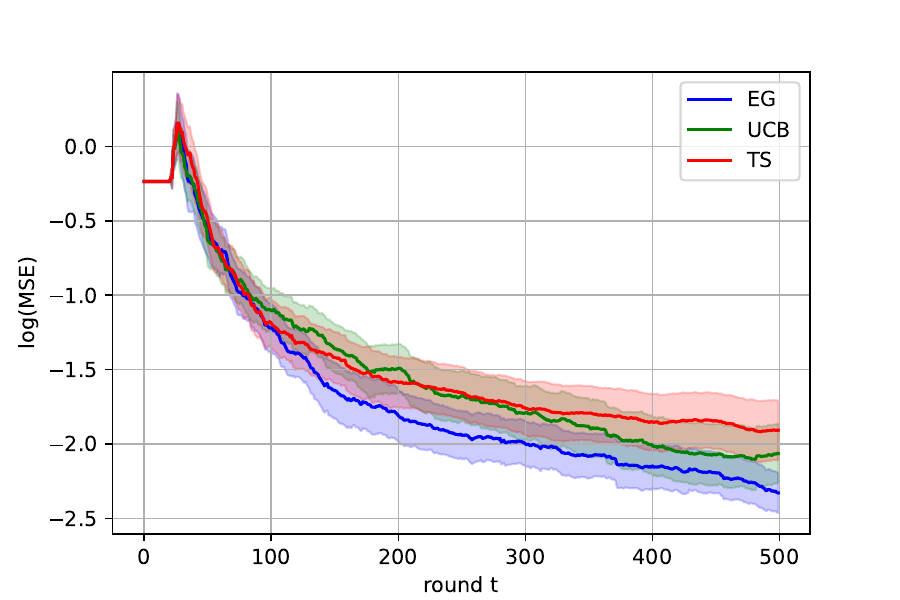}
        \caption{log(MSE) of $\hat{\beta}_t^*$}
    \end{subfigure}
    \hspace{0.07\textwidth}
    \begin{subfigure}{0.43\linewidth}
        \includegraphics[width=\linewidth]{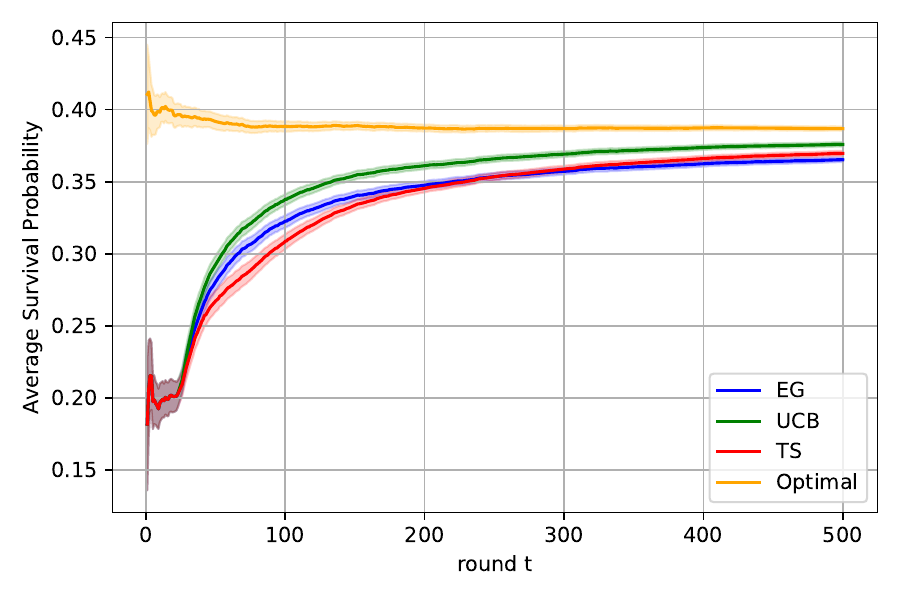}
        \caption{Average survival probability}
    \end{subfigure}

    \caption{The estimation performance for EG-, UCB-, and TS-based online survival bandit algorithms.}
    \label{fig:sim}
\end{figure*}

As shown in the figure, all algorithms converge rapidly: the estimates $\hat{\beta}_t^*$ approach the true coefficients $\beta$ quickly, and the average survival probabilities approach the optimal values at a fast rate. These results validate the effectiveness of our proposed method in achieving fast convergence in an online bandit setting with survival data. The code supporting all simulations and experiments in this paper is available at \href{https://github.com/YangXU63/SurvivalBandits}{this GitHub repository}.
Additional simulation results from sensitivity analyses examining varying censoring rates and model misspecification are provided in Appendix~\ref{appendix:sensitivity}.

\section{Summary and Future Work}
In this paper, we proposed a systematic bandit framework for online survival analysis, encompassing EG, UCB, and TS algorithms for exploration. We established strong theoretical guarantees by deriving sublinear regret bounds. Our empirical evaluation, based on both simulations and a semi-real dataset from SEER, demonstrated that the proposed framework can efficiently learn the underlying model, adapt to the data, and balance exploration and exploitation to quickly approach optimal performance.

Several promising directions remain for future work. One is to extend beyond the Cox PH model to accommodate more flexible survival modeling needs, for example by integrating neural networks through generalizations such as DeepSurv \citep{katzman2018deepsurv}. Another is to investigate more efficient approximation methods for online coefficient updates. Due to staggered entry, the risk set and log-likelihood from the previous round can be reused to incorporate new information for the next round, but approximations that further accelerate this process would be valuable to explore.

\section{Impact Statement}
This paper studies how survival analysis, a widely used methodology in areas such as precision medicine, clinical trials, customer churn analysis, and inventory management, can be adapted to online decision-making settings. The proposed contextual bandit framework supports adaptive and timely decisions under censoring by explicitly accounting for delayed and incomplete outcomes. By enabling principled learning from censored data in sequential settings, this work has the potential to improve decision support across a broad range of applications.

\bibliographystyle{unsrtnat}
\bibliography{reference} 

\newpage
\appendix
\begin{center}
    \textbf{\LARGE Appendix}
\end{center}

\section{Detailed Discussion of Assumption \ref{assump:1}.c and Its Relationship to the Regret Bound}
\label{appendix:assump}

\subsection{A Sufficient Condition for Assumption \ref{assump:1}.c}

As briefly discussed in the main paper, a simple set of sufficient conditions for Assumption~\ref{assump:1}.c is
\begin{enumerate}
    \item[(1)] $\mathbb{E}\!\left[(X_i - \mu(s))(X_i - \mu(s))^\top\right] \succeq \lambda_0 I$,
    \item[(2)] $\Lambda(t) := \sum_{i=1}^n \int_0^t A_i(t, ds) \ge \eta t / d$,
\end{enumerate}
where $\Lambda(t)$ denotes the cumulative intensity, that is, the expected number of observed events up to round $t$. Under these conditions, we have
\begin{equation*}
    \tilde{I}_t \gtrsim (\lambda_0 I)(\eta t / d)
    = (\lambda_0 \eta)\, t I / d
    := \kappa t I / d.
\end{equation*}

We provide further intuition for the two sufficient conditions below.

Condition (1) ensures that covariates within each risk set exhibit sufficient variability in all directions. This is analogous to the minimum-eigenvalue condition on the feature covariance matrix commonly assumed in the linear bandits literature \citep{abbasi2011improved, li2017provably, ding2021efficient, han2021generalized}.

Condition (2) arises naturally from the survival analysis setting. It requires that uncensored events accumulate at a positive rate in $t$. In the main paper, we adopt linear growth as the canonical case, mirroring classical linear contextual bandits where, in the absence of censoring, information accumulates at every round and Condition (1) alone is typically sufficient. In contrast, in survival settings, heavy right censoring may violate this linear growth assumption by substantially slowing the accumulation of observed events. To account for this, we next consider a relaxation of Assumption~\ref{assump:1}.c.

\subsection{Relaxation of Assumption \ref{assump:1}.c}

Even in scenarios where the assumption of linearly growing event counts does not hold, one can consider a more general condition of the form
\[
\lambda_{\min}(\tilde{I}_t) \ge \kappa t^{\alpha}, \quad \alpha \in (0,1).
\]
Under this relaxed assumption, information accumulates at a slower rate. Nevertheless, it is essential that some positive rate of event accumulation is maintained, namely $\alpha > 0$, so that $\tilde{I}_t$ continues to grow and the bandit updates remain nontrivial. This relaxation therefore preserves the learnability of the problem while allowing for substantially sparser feedback.

By extending the arguments and proofs in the appendix to accommodate this relaxed condition, we obtain
\[
\sum_{t=1}^T \|X_t(a)\|_{I_t^{-1}}^2
= O\!\left(\frac{L^2}{\kappa} T^{1-\alpha}\right).
\]
As a result, the regret upper bound (for EG, UCB, and TS) becomes
\[
\tilde{O}\!\left(\sqrt{d \log T} \cdot \sqrt{T \cdot T^{1-\alpha}}\right)
= \tilde{O}\!\left(\sqrt{d}\, T^{1-\alpha/2}\right).
\]
This rate remains sublinear in $T$, though it is larger than the classical $\tilde{O}(\sqrt{T})$ bound due to limited information accumulation caused by censoring and the inherent structure of survival data. This analysis highlights a fundamental tradeoff specific to the survival setting: achieving smaller regret bounds requires stronger guarantees on the growth of information in the Gram matrix. Our formulation of Assumption~\ref{assump:1}.c is sufficiently flexible to accommodate this tradeoff across a wide range of censoring regimes.

\section{Sensitivity Analysis}\label{appendix:more_simulation}

\subsection{Variation of censoring rate on finite-sample performance}\label{appendix:sensitivity}

Heavy censoring leads to fewer observed failures and thus a poorer performance on finite samples. This loss of information can introduce bias \citep{struthers1986misspecified}, inflate variance, and weaken the accuracy of asymptotic approximations \citep{kalbfleisch2002statistical}.

To further test the variation of censoring rate on our algorithm performance, we adjust the value of $\tau$ (which controls the censoring time at round $t$ by $C_t\sim \text{Exp}(\tau)$) in a range of $\tau\in\{0.1, 0.5, 1, 2, 5, 10\}$. The corresponding censoring rates are $\{0.86,0.646,0.552,0.452,0.304,0.242 \}$. We then calculate the $\beta$ estimation MSE and average survival probability change under different $\tau$s. Results summarized in the table below.

\begin{table}[h!]
\centering
\begin{tabular}{lcccccc}
\hline
 & $\tau = 0.1$ & $\tau = 0.5$ & $\tau = 1$ & $\tau = 2$ & $\tau = 5$ & $\tau = 10$ \\
\hline
MSE of $\hat{\beta}$ & 0.0692 & 0.0302 & 0.0208 & 0.0162 & 0.0145 & 0.0135 \\
Mean Survival Probability & 0.6144 & 0.6298 & 0.6325 & 0.6338 & 0.6349 & 0.6356 \\
\hline
\end{tabular}
\end{table}

We can see that a larger $\tau$ leads to lower censoring, which makes the $\beta$ estimates closer to their true values and improves the bandit's performance, resulting in a slightly higher mean survival probability by the end of $T = 500$ rounds. This is consistent with intuition: when convergence is delayed, the algorithm needs more time to return to reasonable recommendations, which can mildly decrease the average survival probability.

In practice, a useful rule of thumb is the standard “events-per-variable” (EPV) criterion \citep{peduzzi1996simulation}. Following this, we recommend starting online learning only once each action group has collected more than $10d_0$ events, with $d_0$ denoting the dimension of the state variable.

\subsection{Variation of model misspecification on finite-sample performance} \label{appendix:model_misspecification}

In this section, we provide simulation results under model misspecification, while keeping the bandit learning procedure based on the Cox proportional hazards (PH) model. Let $X$ denote the constructed feature vector and $U \sim \text{Uniform}(0,1)$. In all settings, survival times are generated via an inverse-transform representation of the form
$Y = F^{-1}(U \mid X)$,
followed by right censoring $R = \min(Y, C)$ and the event indicator $\delta = \mathbf{1}(Y \leq C)$.

\begin{figure*}[htb]
    \centering
    \begin{subfigure}{0.44\linewidth}
        \includegraphics[width=\linewidth]{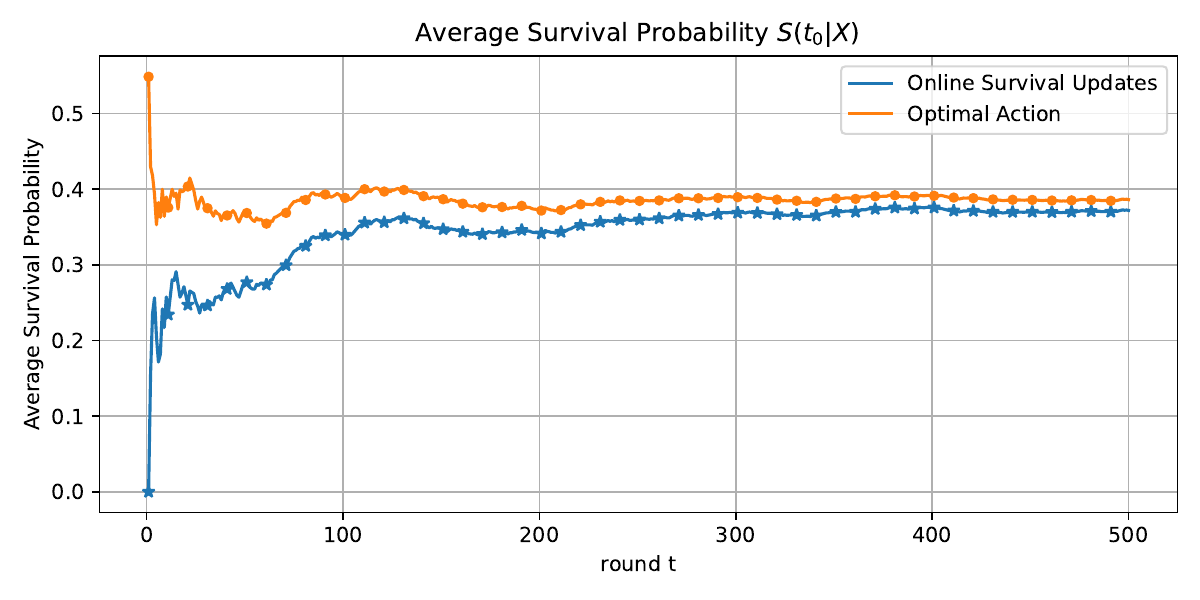}
        \caption{Ground truth generated from Cox PH model}
    \end{subfigure}
    \hspace{0.07\textwidth}
    \begin{subfigure}{0.44\linewidth}
        \includegraphics[width=\linewidth]{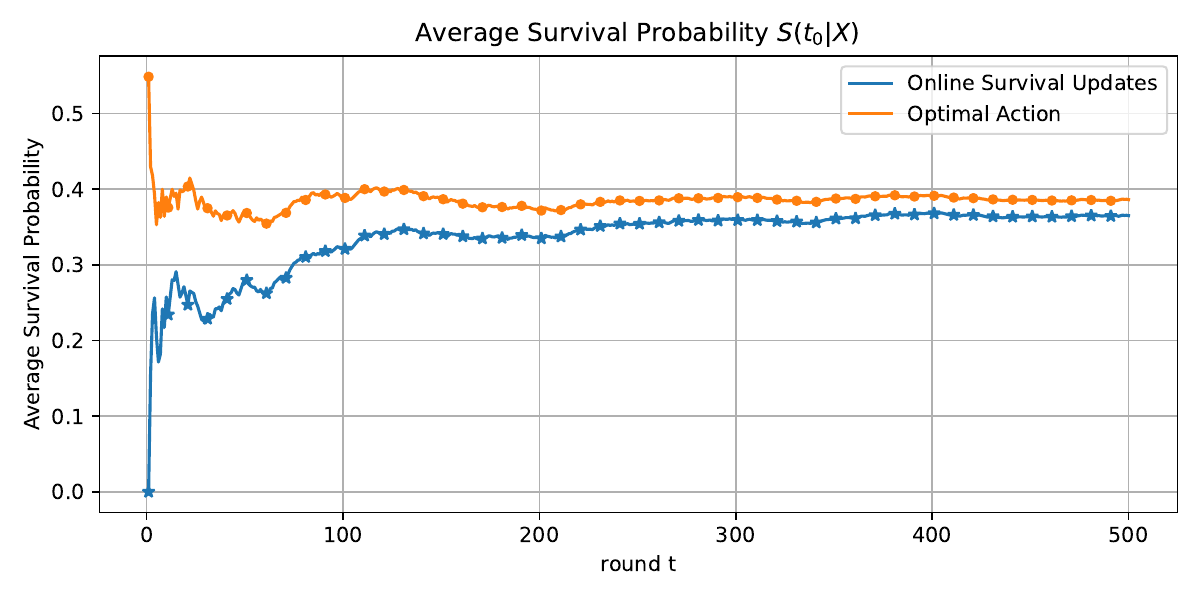}
        \caption{Ground truth generated from disturbed Cox PH model}
    \end{subfigure}
    \medskip
    \begin{subfigure}{0.44\linewidth}
        \includegraphics[width=\linewidth]{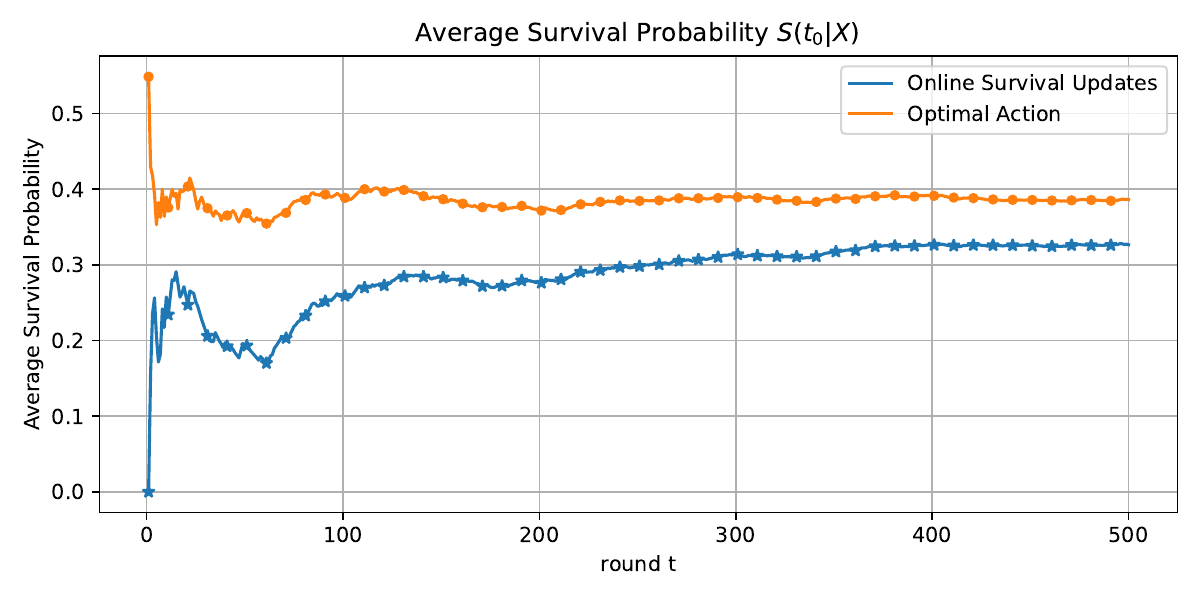}
        \caption{Ground truth generated from AFT model}
    \end{subfigure}
    \hspace{0.07\textwidth}
    \begin{subfigure}{0.44\linewidth}
        \includegraphics[width=\linewidth]{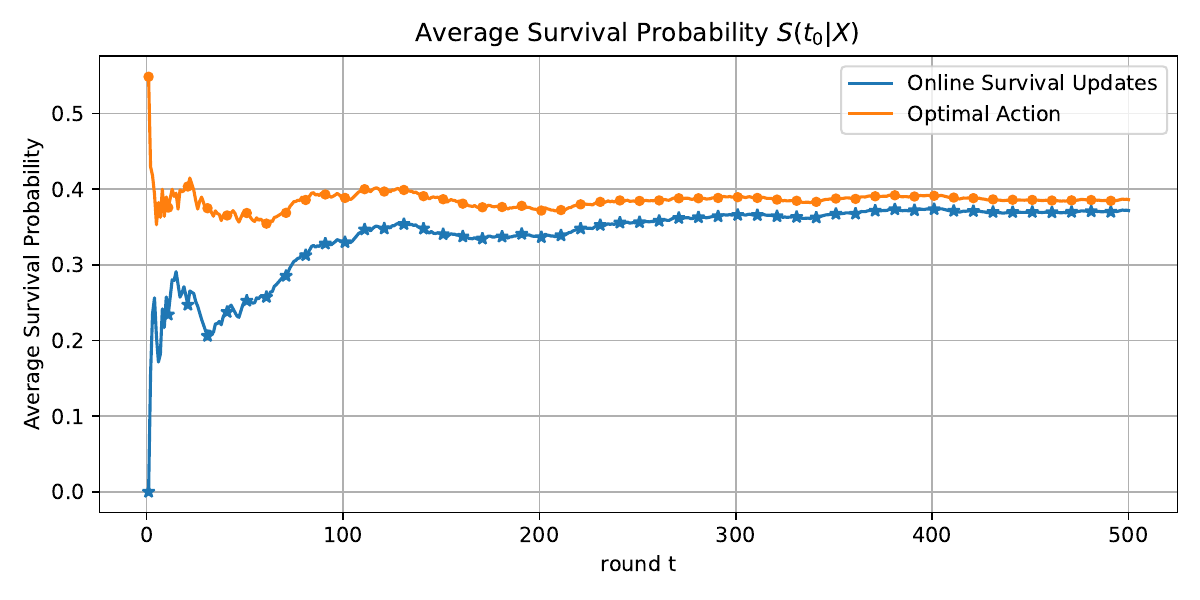}
        \caption{Ground truth generated from piecewise KM-type model}
    \end{subfigure}
    \caption{The average survival probability }
    \label{fig:sim_model_misspecification}
\end{figure*}
\textbf{(a) Correctly specified Cox PH model.}
The correctly specified case for reference, which follows the same Cox PH form as the setup in Section \ref{sec:sim} of the main paper:
$Y = -{\log U}/{\exp(X^\top \beta)}$.

\textbf{(b) Disturbed Cox PH model.}
We introduce additive noise in the log-hazard:
$Y = -{\log U}/{\exp(X^\top \beta + \varepsilon)}$, where $ \varepsilon \sim \mathcal{N}(0, 5^2)$.
This preserves the PH structure but induces random perturbations in the hazard rate.

\textbf{(c) Accelerated Failure Time (AFT) model.}
We consider a log-linear survival model:
$\log Y = X^\top \beta + \varepsilon, \quad \varepsilon \sim \mathcal{N}(0, \sigma^2)$,
equivalently,
$Y = \exp(X^\top \beta + \varepsilon)$.

\textbf{(d) Piecewise constant hazard model.}
We consider a misspecified baseline hazard with random regime shifts:
$Y = -\log U / (h_0 \exp(X^\top \beta))$, where $h_0 \in \{0.5, 1.0, 2.0\}$ and is sampled from a discrete uniform distribution.

\textbf{Evaluation.}
Figure~\ref{fig:sim_model_misspecification} reports the average survival probability under each setting. In all cases, the same Cox PH-based bandit algorithms (EG, UCB, TS) are applied for action selection, and performance is evaluated using the true survival outcomes generated from the corresponding DGP.

Across all misspecified settings (b)-(d), we observe a moderate degradation relative to the correctly specified case (a). However, the proposed algorithms consistently improve over time and remain stable, demonstrating robustness to deviations from the Cox PH assumption.

\subsection{Comparison with Naive Method}\label{appendix:naive}

At round $t$, one can define a naive method as follows: the learner uses only individuals whose failure/censoring status is already observed and discards those whose outcomes remain pending. This approach is inherently biased because individuals with shorter survival times are more likely to have their outcomes observed earlier, creating a selection bias. Our online survival framework explicitly addresses this issue by incorporating censoring and staggered entry.  

Below is the comparison table summarizing the performance of our method versus the naive alternative at $t \in \{0,100,200,300,400,500\}$, showing the MSE of the $\beta$ estimator (mean with standard error in parentheses), averaged over 100 replications:

\begin{center}
\begin{tabular}{c|c|c}
\hline
$t$ & naive & our method \\
\hline
0   & 0.7900 (0.0000) & 0.7900 (0.0000) \\
100  & 0.7862 (0.0382) & 0.7833 (0.0666) \\
200 & 0.4373 (0.3913) & 0.1362 (0.1048) \\
300 & 0.2721 (0.1504) & 0.1605 (0.3022) \\
400 & 0.2385 (0.1227) & 0.1075 (0.1966) \\
500 & 0.2145 (0.1053) & 0.0843 (0.1887) \\
\hline
\end{tabular}
\end{center}
As expected, the naive alternative consistently exhibits larger MSE than our method, highlighting the importance of explicitly accounting for right censoring in the online bandit framework.

\subsection{Average runtime of online survival updates over time}

\begin{figure}[htb]
    \centering
    \includegraphics[width=0.6\linewidth]{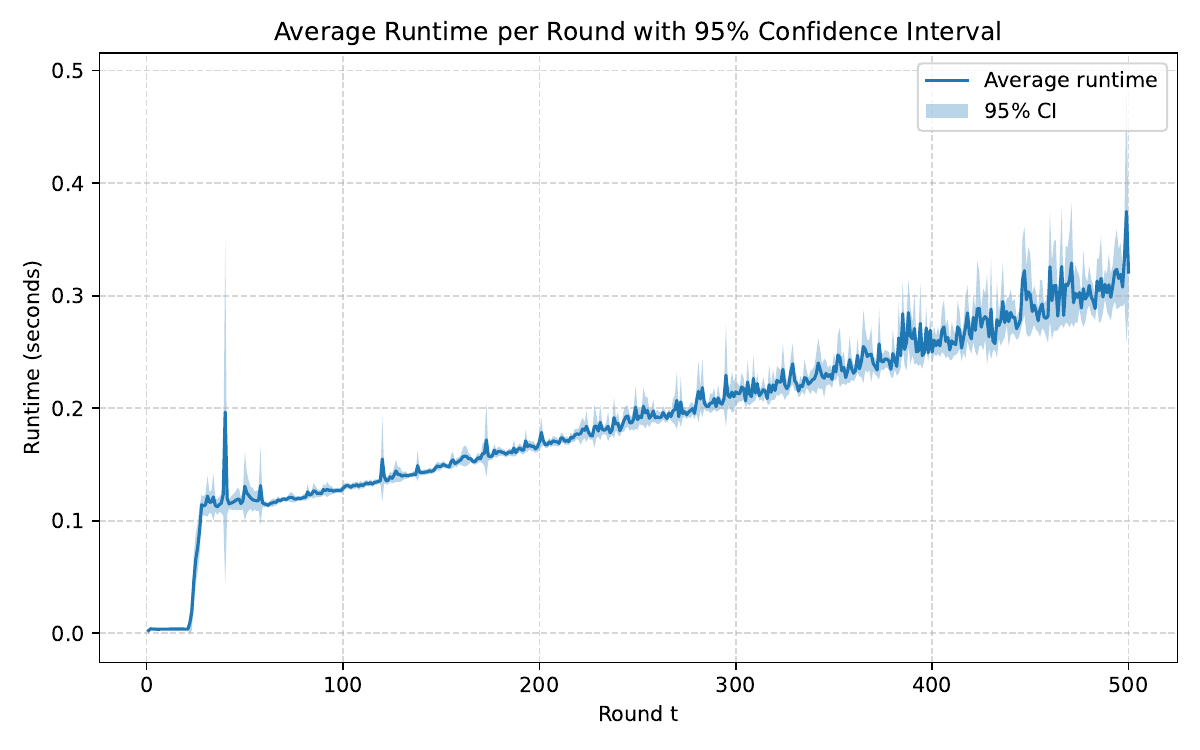}
    \caption{Average runtime of online survival bandits over time}
    \label{fig:sim_runtime}
\end{figure}

\section{Proof of Theorem \ref{thm:1}}\label{appendix:thm1_proof}
\begin{theorem} (Confidence Ellipsoid) 
Suppose Assumption~\ref{assump:1}.a holds, and $\Delta Q(t):=Q(\tau)-Q(\tau-)$ is conditionally sub-Gaussian. For any $\delta>0$, with probability at least $1-\delta$,
\begin{equation*}
    \|\hat{\beta}_t^*-\beta\|_{I_{t}} =\|U(\tau_t,\beta)\|_{I_t^{-1}} \leq   O\big(\sqrt{d\log (4tL^2)-\log \delta}\big)
\end{equation*}
holds uniformly for all $t\in[T]$. 
\end{theorem}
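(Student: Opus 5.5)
The plan has four steps: relate the estimation error to the score, split the score into a martingale part and a drift part, bound the martingale part by a self-normalized inequality, and show the drift part is of lower order.

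\textbf{Step 1: reduce the error to the score.} Since $U(\tau_t,\hat\beta_t^*)=0$, a first-order Taylor (mean-value) expansion of the score around the true $\beta$ gives
\[
U(\tau_t,\beta)=I_t(\bar\beta)(\hat\beta_t^*-\beta)
\]
for some $\bar\beta$ between $\hat\beta_t^*$ and $\beta$. Assumption~\ref{assump:1}.a bounds $\|X\|_2\le L$, so the Cox information is stable under bounded perturbations of $\beta$: the weights $\omega_j$ change by at most a factor $e^{2L\|\bar\beta-\beta\|}$. Up to constants this lets me identify $I_t(\bar\beta)$ with $I_t=I_t(\beta)$, which yields $\|\hat\beta_t^*-\beta\|_{I_t}\asymp\|U(\tau_t,\beta)\|_{I_t^{-1}}$. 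This is the stated identity, read up to the $O(\cdot)$.

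\textbf{Step 2: decompose the score.} Following \cite{sellke1983sequential}, I will write
\[
U(\tau,\beta)=Q(\tau)+D(\tau),\qquad D(\tau)=\sum_{i}\int_0^\tau\{\mu(s)-\bar X(\tau,s)\}N_i(\tau,ds)+\sum_i\int_0^\tau\{X_i-\mu(s)\}A_i(\tau,ds).
\]
The last sum vanishes identically. To see this, swap the sums and use $\sum_i X_i e^{X_i^\top\beta}1\{i\in\mathcal R\}=\bar X\sum_i e^{X_i^\top\beta}1\{i\in\mathcal R\}$: the compensator integral of $X_i-\bar X(\tau,s)$ is zero. So $D(\tau)$ reduces to $\sum_i\int_0^\tau\{\mu(s)-\bar X(\tau,s)\}\{N_i(\tau,ds)-A_i(\tau,ds)\}$ (the exact form depends on how one recenters).

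\textbf{Step 3: self-normalized bound for the martingale part.} $Q(\tau)$ is an $\mathcal F_\tau$-martingale in calendar time, and by hypothesis its increments are conditionally sub-Gaussian. I will apply the method of mixtures of \cite{abbasi2011improved}. For a Gaussian mixing measure with precision $\lambda I$, the process $\exp(\theta^\top Q-\tfrac12\|\theta\|^2_{V})$ is a supermartingale, where $V$ is the predictable quadratic variation. By Ville's inequality, uniformly over all $t$ with probability $1-\delta$,
\[
\|Q(\tau_t)\|_{(V_t+\lambda I)^{-1}}^2\le 2\log\frac{\det(V_t+\lambda I)^{1/2}}{\det(\lambda I)^{1/2}\delta}.
\]
Each increment contributes a rank-one term of norm at most $4L^2$, because $\|X_i-\mu\|\le 2L$. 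The determinant--trace inequality then gives $\log\det\le d\log(\lambda+4tL^2/d)$, which produces the $\sqrt{d\log(4tL^2)-\log\delta}$ rate. To finish, I relate $V_t$ to $I_t$: both are $\tilde I_t$ up to the $\bar X$ versus $\mu$ replacement, and that replacement is controlled in Step 4.

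\textbf{Step 4: control the drift (main obstacle).} The hardest step is showing $\|D(\tau_t)\|_{I_t^{-1}}$ is of lower order uniformly in $t$. This is exactly where staggered entry breaks the martingale structure. My first attempt is a uniform law of large numbers, $\sup_s|\bar X(\tau,s)-\mu(s)|=O_p(\sqrt{\log t/|\mathcal R(\tau,s)|})$. I would combine it with the fact that $\int\{N-A\}(ds)$ is itself a martingale of size $O(\sqrt t)$, so the product is $O(\sqrt{d\log t})$ after normalizing by $I_t^{-1}$, whose smallest eigenvalue grows like $t/d$. This absorbs the drift into the same rate. If uniformity in $s$ is delicate, the fallback is to discretize $s$ on a grid of mesh $1/t$ over $[0,Y_{\max}]$, apply Bernstein bounds on each cell, and take a union bound, which costs only an extra $\log t$ inside the square root.
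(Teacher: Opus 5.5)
Your Steps 1--3 follow the paper's route. You reduce to $\|U(\tau_t,\beta)\|_{I_t^{-1}}$ and split $U=Q+D$, where your $D(\tau)=\int_0^\tau\{\mu(s)-\bar X(\tau,s)\}\{N(\tau,ds)-A(\tau,ds)\}$ is the paper's remainder $r(\tau)$ up to sign. You then bound $Q$ by the method of mixtures of \cite{abbasi2011improved} plus the determinant--trace inequality; your proper Gaussian mixture with a $\lambda I$ regularizer is cleaner than the paper's flat-prior computation. There is a small slip in Step 2: $\sum_i\int\{X_i-\mu(s)\}A_i(\tau,ds)$ does not vanish but equals $\int\{\bar X(\tau,s)-\mu(s)\}A(\tau,ds)$. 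What vanishes is the compensator integral of $X_i-\bar X$, and that is what produces your final formula, which is correct.

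The genuine gap is Step 4, which you rightly call the crux. The inequality $\bigl|\int f\,d(N-A)\bigr|\lesssim\sup_s|f(s)|\cdot\bigl|\int d(N-A)\bigr|$, with $f(s)=\mu(s)-\bar X(\tau,s)$, is false for a non-constant integrand. Against the signed measure $N-A$, a uniform bound on $f$ only buys $\sup|f|$ times the total variation $N+A\asymp t$. You also cannot borrow the $O(\sqrt t)$ cancellation of $N-A$ in calendar time. The weight $\bar X(\tau,s)$ attached to an already observed event keeps changing as later subjects enter $\mathcal R(\tau,s)$, so the integrand is not predictable in $\tau$; this is exactly why $U$ is not a martingale. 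Once the inequality is corrected to the total-variation bound, your uniform control of the integrand gives only $\|D(\tau_t)\|_2=O(t)$. That control allows $|\mathcal R(\tau,s)|$ as small as $1$, as happens in the tail of the survival-time axis. Hence $\|D(\tau_t)\|_{I_t^{-1}}=O(\sqrt{dt})$, which destroys the rate. The $s$-grid fallback only makes the law of large numbers uniform in $s$ and does not touch this.

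The paper does not derive this step. It imports Theorem~1 of \cite{sellke1983sequential}, which controls $r$ uniformly over all calendar times: with probability tending to one as $\kappa\to\infty$, $|r(\tau)|\le\kappa+m(\tau)^{1/2-\epsilon}$ for all $\tau\ge0$, where $m(\tau)$ is the expected number of deaths observed by $\tau$. It then concludes $\|r(\tau_t)\|_{I_t^{-1}}=o(1)$.

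A self-contained repair is the correct form of your heuristic at fixed calendar time. For each round, $s\mapsto\int_0^s\{\mu(u)-\bar X(\tau_t,u)\}\{N(\tau_t,du)-A(\tau_t,du)\}$ is an $\mathcal F_{\tau_t,s}$-martingale with predictable integrand; this is the Sellke--Siegmund structure the paper also quotes. Its predictable variation $\int\|\bar X(\tau_t,s)-\mu(s)\|^2A(\tau_t,ds)$ is $O(\log t)$ under your uniform law of large numbers. The reason is that $A(\tau_t,ds)\lesssim|\mathcal R(\tau_t,s)|\,\lambda_0(s)\,ds$, with $\lambda_0$ the baseline hazard, cancels the $1/|\mathcal R|$. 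A Freedman-type inequality plus a union bound over the $T$ rounds then makes $\|D(\tau_t)\|_2$ polylogarithmic.

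Three further points remain open in your plan:
\begin{itemize}
\item Step 4 uses $\lambda_{\min}(I_t)\gtrsim t/d$, and Step 3 needs something similar to pass from the $(V_t+\lambda I)^{-1}$-norm to the $I_t^{-1}$-norm. This is Assumption~\ref{assump:1}.c, stated for $\tilde I_t$, not a hypothesis of this theorem.
\item Passing from $V_t=\tilde I_t$ to $I_t$ requires replacing $N$ by $A$ in a matrix-valued integral, not only $\bar X$ by $\mu$, so Step 4 does not cover it. The paper simply asserts $\|I_t-\tilde I_t\|=o_p(1)$.
\item A law of large numbers toward a fixed $\mu(s)$ presupposes i.i.d.-type covariates. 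This needs justification when $X_i=\Phi(S_i,a_i)$ with adaptively chosen actions.
\end{itemize}
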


\textbf{Proof:}

Recall that the score process is defined as 
$$
U(\tau,\beta)= \sum_{i=1}^N \int_{0}^\tau\left[X_i-\bar{X}(\tau,s)\right]N_i(\tau,ds),
$$
where $\bar{X}(\tau,s)=\frac{\sum_{j\in\mathcal{R}(\tau,s)}\exp(X_j^\top\beta)\cdot X_j}{\sum_{j\in\mathcal{R}(\tau,s)}\exp(X_j^\top\beta)}$. At round $t$, we estimate $\hat{\beta}^*_t$ by solving $U(\tau_t,\hat{\beta}^*_t) = 0$.

By doing Taylor expansion for $U(\tau_t,\hat{\beta}^*_t)$ at the true value $\beta$, we have
$$
U(\tau_t,\hat{\beta}^*_t) = 0 = U(\tau_t,\beta) + I_t(\hat{\beta}^*_t-\beta) + O_p(\|\hat{\beta}^*_t-\beta\|^2),
$$
where $I_t = -\nabla^2l(\beta)$ is the information process, and $O_p(\|\hat{\beta}^*_t-\beta\|^2)$ is a negligible remainder term. Thus,
\begin{equation*}
    \hat{\beta}^*_t-\beta \approx -I_t^{-1}U(\tau_t,\beta).
\end{equation*}
According to the definition of matrix norm, 
$$
\|\hat{\beta}^*_t-\beta\|_{I_{t}} = \sqrt{(\hat{\beta}^*_t-\beta)^\top  I_t^{-1} (\hat{\beta}^*_t-\beta)} = \|I_t^{-1/2}U(\tau_t,\beta)\|_2 = \|U(\tau_t,\beta)\|_{I_t^{-1}}.
$$
Therefore, finding the high probability uniform bound for $\|\hat{\beta}^*_t-\beta\|_{I_{t}}$ is equivalent to bound $\|U(\tau_t,\beta)\|_{I_t^{-1}}$.

Let $A_i(\tau,ds) = \boldsymbol{1}_{\{i\in\mathcal{R}(\tau,s)\}}\Lambda_i(ds)$, so that $A_i(t,s) = \Lambda_i\{s\wedge T_i\wedge C_i\wedge (\tau-\tau_i)^+\}$. $A_i(\tau,ds)$ is the compensator, which captures the expected event rate given the history.
Since $U(\tau,\beta)$ denotes the score function accumulated up to calendar time $\tau$, we generalize it to a score process $U(\tau,s,\beta)$, where
$$
U(\tau,s,\beta) = \sum_{i=1}^N \int_{0}^s\left[X_i-\bar{X}(\tau,u)\right]\{N_i(\tau,du)-A_i(\tau,du)\}.
$$
By definition, $U(\tau,\tau,\beta) = U(\tau,\beta)$.

Let $\mathcal{F}_{\tau,s}$ be the sub-$\sigma$-algebra of $\mathcal{F}_s$ containing events which have been observed by calendar time $\tau$ and are of survival time $\leq s$. As proved in \cite{sellke1983sequential}, $N_i(\tau,s)-A_i(\tau,s)$ is an $\mathcal{F}_{\tau,s}$ martingale in $s$ for each fixed $\tau$. Thus, when $\tau$ is fixed, $\{U(\tau,s,\beta),\mathcal{F}_{\tau,s}\}$ is a martingale in $s$.

However, $U(\tau,\beta)$ is not a martingale in $t$ in general due to the dependence of $\bar{X}(\tau,s)$ on $\tau$. Fortunately, as briefly discussed in the main paper, by an informal law of large numbers, $\bar{X}(\tau,s)$ can be approximated by 
\begin{equation*}
    \mu(s) = \frac{\mathbb{E}\big[X\exp\{X^\top\beta\};Y\wedge C\geq s\big]}{\mathbb{E}\big[\exp\{X^\top\beta\};Y\wedge C\geq s\big]}
\end{equation*}
when $\mathcal{R}(\tau,s)$ is large. Therefore, we define 
\begin{equation*}
    Q(\tau) = \sum_{i=1}^N \int_0^\tau [X_i-\mu(s)]\{N_i(\tau,ds)-A_i(\tau,ds)\},
\end{equation*}
so that $\mu(s)$ is no longer a function of $\tau$.
Therefore, $\{Q(\tau),\mathcal{F}_{\tau}\}$ is a martingale in $\tau$.
To accommodate the two time scales, we further define $N(\tau,s) =\sum_{i=1}^N N_i(\tau,s)$, $A(\tau,s) =\sum_{i=1}^N A_i(\tau,s)$, and $D(\tau) = \mathbb{E}[N(\tau)]=\mathbb{E}[\sum_iN_i(\tau,\tau)]$, which represents the expected number of events (deaths) that occur before time $\tau$. Then
\begin{equation*}
    \begin{aligned}
    r(\tau) := U(\tau,\beta) - Q(\tau) 
    &= \sum_{i=1}^N \int_0^{\tau} [\bar{X}(\tau,s)-\mu(s)]\{N_i(\tau,ds)-A_i(\tau,ds)\}\\
    &=\int_0^{\tau} [\bar{X}(\tau,s)-\mu(s)]\{N(\tau,ds)-A(\tau,ds)\}.
    \end{aligned}
\end{equation*}
\cite{sellke1983sequential} proved in Theorem 1 of their paper that $r(\tau)$ is small compared to $D(\tau)$ uniformly in $\tau$. Hence, for any $0<\epsilon<1/15$, and uniformly over the arrival process,
\begin{equation*}
\mathbb{P}(|r(\tau)|\leq \kappa + D(\tau)^{0.5-\epsilon}, \tau \geq 0)\rightarrow 1
\end{equation*}
as $\kappa\rightarrow \infty$. Equivalently, we have $r(t) = O_p(1) + O_p(D(t)^{0.5-\epsilon})$.

To derive a uniform high-probability upper bound for $\|U(\tau,\beta)\|_{I_t^{-1}}$, it is sufficient to derive upper bounds for $\|Q(\tau)\|_{I_t^{-1}}$ and $\|r(\tau)\|_{I_t^{-1}}$ by triangle inequality:
$$
\|U(\tau_t,\beta)\|_{I_t^{-1}} = \|Q(\tau_t) + r(\tau_t)\|_{I_t^{-1}}\leq  \|Q(\tau_t)\|_{I_t^{-1}} + \|r(\tau_t)\|_{I_t^{-1}}.
$$
In the following steps, we first derive uniform upper bounds for $\|r(\tau_t)\|_{I_t^{-1}}$ and $ \|Q(\tau_t)\|_{I_t^{-1}}$ in Steps 1 and 2, and then conclude the proof of the theorem in Step 3.

\textbf{Step 1.} Let’s first handle $\|r(\tau_t)\|_{I_t^{-1}}$. Since 
$$
\begin{aligned}
I_t  &=\sum_{i=1}^N\int_0^{\tau_t} \sum_{j\in\mathcal{R}(\tau_t,s)}\omega_j(\tau_t,s)\left[(X_j-\bar{X}(\tau_t,s))(X_j-\bar{X}(\tau_t,s))^\top\right]N_i(\tau_t,ds),
\end{aligned}
$$
which is a sum of one weighted covariance matrix per event. By Assumption \ref{assump:1}.a, $X(a)$ is uniformly bounded by $L$, so $\|I_t\| =O(D(\tau_t))$. Therefore, the remainder term has an order
\begin{equation}\label{eq:r_1}
\begin{aligned}
\|r(\tau_t)\|_{I_t^{-1}} &\sim O(\|r(\tau_t)\|)\cdot O(\|I_t\|^{-1/2})\sim O_p(D(\tau_t)^{0.5-\epsilon}+1)\cdot O(D(\tau_t)^{-1/2}) =o(1).
\end{aligned}  
\end{equation}

%Thus, $\|r(t)\|_{V_t^{-1}}$ is negligible compared with $\|Q(t)\|_{I_t^{-1}}$.

\textbf{Step 2.} Next, let’s derive the order of $\|Q(\tau_t)\|_{I_t^{-1}}$.

First, we define the variance of $Q(\tau)$ as $\tilde{I}_{\tau}$, where
$$
\begin{aligned}
\tilde{I}_{t} &= \sum_{i=1}^N\int_0^{\tau_t} \left[\{X_i-\mu(s)\}\{X_i-\mu(s)\}^\top\right]A_i(\tau_t,ds),
\end{aligned}
$$
which is asymptotically equivalent to $I_{t}$ when $\mathcal{R}(\tau_t,s)$ is large. Thus, we have $\|I_t-\tilde{I}_t\|=o_p(1)$.

For the purpose of the proof, we generalize the adjusted information matrix $\tilde{I}_t$, which is evaluated at each round $t$, to a continuous-time counterpart $\tilde{\mathcal{I}}_{\tau}$, defined over calendar time $\tau\geq 0$, where
$$
\tilde{\mathcal{I}}_{\tau} = \sum_{i=1}^N\int_0^{\tau} \left[\{X_i-\mu(s)\}\{X_i-\mu(s)\}^\top\right]A_i(\tau,ds).
$$
Let $\lambda\in\mathbb{R}^d$ be arbitrary and consider for any $\tau\geq 0$,
\begin{equation*}
    M_{\tau}^\lambda := \exp\Big( \lambda^\top Q(\tau) - \frac{1}{2}\|\lambda\|_{\tilde{\mathcal{I}}_{\tau}}^2\Big).
\end{equation*}

Define $\Delta Q(\tau) = Q(\tau) - Q(\tau-) $, and $\Delta \tilde{\mathcal{I}}_t = \tilde{\mathcal{I}}_{\tau}-\tilde{\mathcal{I}}_{\tau-}$. Then we can decompose $M_{\tau}^\lambda$ by
$$
M_t^\lambda = M_{\tau-}^\lambda \cdot \exp\Big( \lambda^\top \Delta Q(\tau) - \frac{1}{2}\lambda^\top \Delta{\tilde{\mathcal{I}}_t}\lambda\Big).
$$
Since $\Delta Q(t)$ is conditionally sub-Gaussian, we have 
$$
\mathbb{E}[\exp\{ \lambda^\top \Delta Q(\tau)\}\mid \mathcal{F}_{\tau-}]\leq \exp\Big(\frac{1}{2}\lambda^\top \Delta{\tilde{\mathcal{I}}_\tau}\lambda\Big).
$$
We claim that $M_t^\lambda$ is a supermartingale, since
$$
\mathbb{E}[M_\tau^\lambda\mid \mathcal{F}_{\tau-}] = M_{\tau-}^\lambda\cdot \mathbb{E}\bigg[\exp\Big(\lambda^\top\Delta Q(\tau)-\frac{1}{2}\lambda^\top \Delta{\tilde{\mathcal{I}}_t}\lambda\Big)\big| \mathcal{F}_{\tau-}\bigg]\leq M_{\tau-}^\lambda \leq M_0^\lambda =1,
$$
which gives us $\mathbb{E}[M_\tau^\lambda] = \mathbb{E}\big[\mathbb{E}[M_\tau^\lambda\mid \mathcal{F}_{\tau-}]\big]\leq \mathbb{E}[M_0^\lambda] = 1$.

Then, following Lemma 9 of \cite{abbasi2011improved} on the self-normalized bound for vector-valued martingales, we can obtain a similar result, up to minor modifications, that
$$
\begin{aligned}
M_\tau &= \int_{\mathbb{R}^d} M_\tau^\lambda f(\lambda) d\lambda = \int_{\mathbb{R}^d} \exp\Big(\lambda^\top\Delta Q(\tau)-\frac{1}{2}\lambda^\top \Delta{\tilde{\mathcal{I}}_\tau}\lambda\Big) f(\lambda) d\lambda\\
&= \int_{\mathbb{R}^d} \exp\Big(-\frac{1}{2}\|\lambda - \tilde{\mathcal{I}}_t^{-1} Q(\tau)\|_{\tilde{\mathcal{I}}_\tau}^2+\frac{1}{2}\|Q(\tau)\|_{\tilde{\mathcal{I}}_t^{-1}}^2\Big)  d\lambda\\
& = c(\tilde{\mathcal{I}}_\tau)\exp\Big(\frac{1}{2}\|Q(\tau)\|_{\tilde{\mathcal{I}}_\tau^{-1}}^2\Big),
\end{aligned}
$$
where $c(\tilde{\mathcal{I}}_\tau) = \sqrt{(2\pi)^d/\det(\tilde{\mathcal{I}}_\tau)}$, and we assume that there is no informative prior on $\lambda$, i.e. $f(\lambda) = 1$.
Now, from $\mathbb{E}[M_\tau^\lambda]\leq 1$, we have for any $\delta>0$ that

\begin{equation}\label{eq:Q_2}
\begin{aligned}
&\mathbb{P}\bigg[\|Q(\tau)\|^2_{\tilde{\mathcal{I}}_\tau^{-1}} > -2\log (\delta\cdot c(\tilde{\mathcal{I}}_\tau))\bigg] = \mathbb{P}\bigg[\delta\cdot c(\tilde{\mathcal{I}}_\tau)\exp\Big(\frac{1}{2}\|Q(\tau)\|_{\tilde{\mathcal{I}}_\tau^{-1}}^2\Big) > 1\bigg]\\&=\mathbb{P}(\delta M_\tau >1)\leq \mathbb{E}[\delta M_\tau] \leq \delta,
\end{aligned}
\end{equation}
where the second last inequality holds by Markov’s inequality.

Finally, we use a stopping time construction to construct a uniform upper bound for $\|Q(\tau)\|_{\tilde{\mathcal{I}}_\tau^{-1}}$. Define the bad event
\begin{equation*}
    B_{\tau}(\delta) = \Big\{\omega\in\Omega: \|Q(\tau)\|_{\tilde{\mathcal{I}}_\tau^{-1}}^2 >-2\log (\delta\cdot c(\tilde{\mathcal{I}}_\tau)\Big\}.
\end{equation*}
We are interested in bounding the probability that $\bigcup_{\tau\geq 0}B_\tau(\delta)$ happens. Define $\tilde{\tau}(\omega) = \min\{\tau\geq 0: \omega \in B_\tau(\delta)\}$, with the convention that $\min\phi = \infty$. Then, $\tilde{\tau}$ is a stopping time, and 
\begin{equation*}
    \bigcup_{\tau\geq 0}B_\tau(\delta) = \{\omega:\tilde{\tau}<\infty\}.
\end{equation*}
Thus, by Equation \eqref{eq:Q_2}, we have
\begin{equation*}
\begin{aligned}
    &\mathbb{P}\bigg[\bigcup_{\tau\geq 0}B_\tau(\delta)\bigg] = \mathbb{P}(\tilde{\tau}<\infty) = \mathbb{P}\bigg[\|Q(\tilde{\tau})\|^2_{\tilde{\mathcal{I}}_{\tilde{\tau}}^{-1}} > -2\log (\delta\cdot c(\tilde{\mathcal{I}}_{\tilde{\tau}})),\tilde{\tau}<\infty\bigg]\\
    &= \mathbb{P}\bigg[\delta\cdot c(\tilde{\mathcal{I}}_{\tilde{\tau}})\exp\Big(\frac{1}{2}\|Q(\tilde{\tau})\|_{\tilde{\mathcal{I}}_{\tilde{\tau}}^{-1}}^2\Big) > 1\bigg]=\mathbb{P}(\delta M_{\tilde{\tau}} >1)\leq \mathbb{E}[\delta M_{\tilde{\tau}}] \leq \delta.
\end{aligned}
\end{equation*}

Thus, substituting back the definition of $c(\tilde{\mathcal{I}}_\tau)$, we obtain the uniform upper bound for all 
$\tau\geq 0$:
$$
\|Q(\tau)\|^2_{\tilde{\mathcal{I}}_\tau^{-1}} = O(\log(\det (\tilde{\mathcal{I}}_\tau)/\delta^2)).
$$
Since $\|I_t-\tilde{I}_t\|=o_p(1)$, it follows from the definition of the matrix two-norm and basic algebra that
\begin{equation}\label{eq:Q_3}
\begin{aligned} 
&\big|\|Q(\tau_t)\|^2_{{I}_t^{-1}} -\|Q(\tau_t)\|^2_{\tilde{I}_t^{-1}} \big|= \big|Q(\tau_t)^\top\{I_t^{-1}- \tilde{I}_t^{-1}\}Q(\tau_t)\big|\\& \leq \|Q(\tau_t)\|_2^2 \cdot \|I_t^{-1}- \tilde{I}_t^{-1}\|_2 = \|Q(\tau_t)\|_2^2 \cdot \|I_t^{-1}(\tilde{I}_t-I_t) \tilde{I}_t^{-1}\|_2 \\&\leq \|Q(\tau_t)\|_2^2 \cdot \|I_t^{-1}\|_2 \cdot \|\tilde{I}_t-I_t\|_2\cdot \|\tilde{I}_t^{-1}\|_2 =o(1).
\end{aligned}
\end{equation}
Therefore, 
\begin{equation}\label{eq:Q_1}
\|Q(\tau_t)\|^2_{{I}_t^{-1}} = O(\log(\det (\tilde{I}_t)/\delta^2)) + o(1).
\end{equation}

The final step in deriving the uniform upper bound for $\|Q(\tau_t)\|^2_{{I}_t^{-1}}$ is to obtain a bound on $\det (\tilde{I}_t)$.

According to the (AM-GM) inequality, $\det(\tilde{I}_t) \leq \Big(\frac{\text{tr}(\tilde{I}_t)}{d}\Big)^d$. By Assumption \ref{assump:1}.a, $\|X_i-\mu\| \leq 2L$. Since $I_t$ is a positive semi-definite matrix, we have
$$
\lambda_{\max}(\tilde{I}_t)\leq \text{tr} (\tilde{I}_t)\leq t \cdot (2L)^2.
$$
Then
\begin{equation}\label{eq:det_It}
 \det(\tilde{I}_t) \leq  \Big({t \cdot (2L)^2}/d\Big)^d = (4tL^2/d)^d.
\end{equation}

Combining the results of Equation \eqref{eq:Q_1} and \eqref{eq:det_It}, we have
\begin{equation}
\|Q(\tau_t)\|^2_{{I}_t^{-1}} = O(\log(\det (\tilde{I}_t)/\delta^2)) + o(1) = O(d\log (4tL^2/d)-2\log \delta).
\end{equation}

\textbf{Step 3.} Summary.

Based on the order derived in Equation \eqref{eq:r_1} and \eqref{eq:Q_1}, we have $\|r(\tau_t)\|_{I_t^{-1}}=o(1)$, which is negligible compared with $\|Q(\tau_t)\|_{I_t^{-1}}$. Therefore, by triangle inequality, the uniform upper bound for any $t\geq 0$ is given by 
$$
\|\hat{\beta}_t^*-\beta\|_{I_{t}} =\|U(\tau_t,\beta)\|_{I_t^{-1}} \leq  \sqrt{\|Q(\tau_t)\|_{I_t^{-1}} + \|r(\tau_t)\|_{I_t^{-1}}}= O(\sqrt{d\log (4tL^2/d)-2\log \delta}).
$$
This finishes the proof of this theorem.
\section{Proof of Theorem \ref{thm:2}}\label{appendix:thm2_proof}

\begin{theorem} (Regret Bound)
    Under Assumption \ref{assump:1}, the regret upper bound for online survival analysis under either EG, UCB, or TS algorithm for exploration, is given by
    \begin{equation*}
        \text{Regret}(T) = {O}(\sqrt{T}\log^2T).
    \end{equation*}
\end{theorem}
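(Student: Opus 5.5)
We reduce everything to bounding $\sum_{t=1}^T\mathbb{E}[\Delta_t]$, since earlier in this section we showed $\text{Regret}(T)\le e^{-1}\sum_t\mathbb{E}[\Delta_t]$ (or $C_0\sum_t\mathbb{E}[\Delta_t]$ for Lipschitz $f$). The basic tool is Theorem~\ref{thm:1}. Fix $\delta=1/T$ and let $\mathcal{E}$ be the event that $\|\hat\beta^*_{t-1}-\beta\|_{I_{t-1}}\le \gamma_T:=O(\sqrt{d\log(4TL^2/d)+2\log T})$ holds for all $t\le T$. Off $\mathcal{E}$, each $\Delta_t\le 2L\|\beta\|$, so the total contribution from $\mathcal{E}^c$ is $O(1)$. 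On $\mathcal{E}$, Cauchy--Schwarz gives, for any two arms $a,a'$,
\[
|(X_t(a)-X_t(a'))^\top(\hat\beta^*_{t-1}-\beta)|\le \gamma_T\big(\|X_t(a)\|_{I_{t-1}^{-1}}+\|X_t(a')\|_{I_{t-1}^{-1}}\big).
\]
The second key ingredient is an elliptical-potential-type bound on the widths. By Assumption~\ref{assump:1}.c we have $\lambda_{\min}(\tilde I_{t})\ge\kappa t/d$, and from the proof of Theorem~\ref{thm:1} we have $\|I_t-\tilde I_t\|=o_p(1)$. Together these give $\|X_t(a)\|^2_{I_{t-1}^{-1}}\le 2L^2d/(\kappa t)$ for $t$ beyond a burn-in, uniformly in $a$. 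Hence $\sum_t\|X_t(a_t)\|_{I_{t-1}^{-1}}\le \sqrt{T\sum_t 2L^2d/(\kappa t)}=O(\sqrt{dT\log T})$.

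\emph{UCB.} On $\mathcal{E}$ with $\alpha\ge\gamma_T$, the optimistic choice gives the standard bound $\Delta_t\le 2\alpha\|X_t(a_t)\|_{I_{t-1}^{-1}}$ (we use $\Sigma_{t-1}(\hat\beta^*_{t-1})$ in place of $I_{t-1}(\beta)^{-1}$, with the discrepancy absorbed by smoothness of $I$ in $\beta$ and the ellipsoid). Summing and using the width bound gives $O(\gamma_T\sqrt{dT\log T})$.

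\emph{EG.} Exploitation rounds are bounded exactly as above, giving $\Delta_t\le 2\gamma_T\max_a\|X_t(a)\|_{I_{t-1}^{-1}}$. Exploration rounds cost at most $2L\|\beta\|$ each, so they contribute $O(\sum_t\epsilon_t)$. With $\epsilon_t\asymp t^{-1/2}$ this is $O(\sqrt T)$. Assumption~\ref{assump:1}.c is what keeps the information growing despite the decaying exploration rate.

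\emph{TS.} Under the Laplace approximation, $\tilde\beta_{t-1}\sim\mathcal N(\hat\beta^*_{t-1},v^2I_{t-1}^{-1})$. We follow Agrawal--Goyal: a concentration event gives $\|\tilde\beta_{t-1}-\hat\beta^*_{t-1}\|_{I_{t-1}}\le v\sqrt{2d\log(dT^2)}$. An anti-concentration argument gives a constant probability that the sampled optimal arm looks optimistic, which controls saturated arms. This yields regret $O(\gamma_T\sqrt{d\log T}\sqrt{dT\log T})$, which carries the extra $\sqrt{\log T}$ noted in the remark.

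Collecting terms, $\gamma_T=O(\sqrt{d\log T})$ times $O(\sqrt{dT\log T})$ gives $\tilde O(d\sqrt T)$; for fixed $d$ this is $O(\sqrt T\log^2T)$ at worst (the TS case). The main obstacle is the width step. We must replace the Abbasi-Yadkori potential lemma, which would need rank-one updates $I_t=I_{t-1}+X_tX_t^\top$ that fail here because $I_t$ changes through risk sets and delayed events. We avoid that lemma entirely by leaning on the eigenvalue growth in Assumption~\ref{assump:1}.c. This step also requires uniform control of $\|I_t-\tilde I_t\|$ over $t$, and of $I$ evaluated at $\hat\beta^*$ versus at $\beta$. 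We would get both from Lipschitz continuity of $\omega_j$ in $\beta$ on $\mathcal{E}$ together with the $O(d/t)$ shrinkage of $\hat\beta^*_t-\beta$.
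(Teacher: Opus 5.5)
For UCB and EG you follow the paper's route. You reduce to $\sum_t\mathbb{E}[\Delta_t]$, take the radius from Theorem~\ref{thm:1}, and replace the elliptical-potential lemma by the eigenvalue growth in Assumption~\ref{assump:1}.c. That gives $\|X_t(a)\|^2_{I_{t-1}^{-1}}\lesssim dL^2/(\kappa t)$ uniformly in $a$, followed by Cauchy--Schwarz. Two of your steps are more careful than the paper's. Your $\delta=1/T$ conversion to an expectation bound is one. The other is your multiplicative comparison of $I_{t-1}$ with $\tilde I_{t-1}$: the paper instead writes the width as $\|X\|_{\tilde I^{-1}}+o(1)$, and an additive error does not obviously survive summation over $T$ rounds. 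The paper's $\epsilon_t=\min\{1,c/t\}$ makes EG exploration cost $O(\log T)$ rather than your $O(\sqrt T)$, but both fit.

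The real difference is TS, where the paper does not use Agrawal--Goyal at all. Because $a_t$ minimizes $X_t(a)^\top\tilde\beta_{t-1}$, one has $\Delta_t\le (X_t(a_t)-X_t(a^*_t))^\top(\tilde\beta_{t-1}-\beta)\le\|\tilde\beta_{t-1}-\beta\|_{I_{t-1}}\big(\|X_t(a_t)\|_{I_{t-1}^{-1}}+\|X_t(a^*_t)\|_{I_{t-1}^{-1}}\big)$. The optimal arm's width is already controlled by your uniform-in-$a$ bound. Anti-concentration and saturated arms are needed in standard linear bandits only because that width is uncontrolled; here they are redundant.

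Your route also costs something:
\begin{itemize}
\item Constant optimism probability requires an inflated posterior, $v\asymp\gamma_T$. With the plain Laplace posterior ($v=1$) the probability is about $\Phi(-\gamma_T)$, which is polynomially small in $T$. So your argument analyzes a modified sampler rather than the paper's TS.
\item It yields $d^{3/2}\sqrt T$ scaling, which contradicts your claim that collecting terms gives $\tilde O(d\sqrt T)$.
\end{itemize}
For the statement as written ($d$ fixed, $O(\sqrt T\log^2T)$), your bound still suffices. The direct route is shorter, covers the un-inflated posterior, and keeps the dependence on $d$ linear.
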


\textbf{Proof:}
Recall that the regret at round $T$ is defined as 
\begin{equation*}
    \text{Regret}(T) = \sum_{t=1}^T \mathbb{E}[f(S(\tau_0\mid X_t(a^*_t)))-f(S(\tau_0\mid X_t(a_t)))],
\end{equation*}
where $a^*_t$ is the optimal action, $a_t$ is the action chosen by the algorithm (e.g., EG, UCB, or TS), and the expectation is taken with respect to the randomness in $X(a)$. Given that $f$ is (uniformly) Lipschitz on $[0,1]$, the regret can be upper bounded by $C_0 \cdot \sum_{t=1}^T\mathbb{E}[\Delta_t]$ for some large constant $C_0$, which reduces the problem to establishing a regret upper bound for
\begin{equation*}
\sum_{t=1}^T\mathbb{E}[\Delta_t] = \sum_{t=1}^T [X_t(a_t)^\top\beta - X_t(a^*_t)^\top\beta].
\end{equation*}

We now claim that if $a_t$ is obtained via Cox PH model with proper exploration by either EG, UCB, or TS with Equation~\eqref{eq:EG_act}, \eqref{eq:UCB_act}, or \eqref{eq:TS_act}, then 
\begin{equation*}
    \sum_{t=1}^T \mathbb{E}[X_t(a^*)^\top\beta - X_t(a_t)^\top\beta] = \tilde{O}(\sqrt{T}).
\end{equation*}
Once this claim is established, the proof of Theorem~\ref{thm:2} is complete.

In the following subsections (Sections~\ref{appendix:bound_UCB}-\ref{appendix:bound_EG}), we provide detailed proofs for UCB, TS, and EG, respectively.

\subsection{Proof of regret bound for UCB-based survival updates}\label{appendix:bound_UCB}

For UCB-based exploration, the regret in each round $t$ can be decomposed as 
\begin{equation}\label{eq:Delta_t}
    \begin{aligned}
    \Delta_t &= X_t(a_t)^\top\beta - X_t(a^*)^\top\beta  \\&= \underbrace{\{X_t(a_t)^\top(\beta -\hat{\beta}^*_{t-1})\}}_{\text{I: exploitation error on }a_t} -\underbrace{\{X_t(a^*)^\top(\beta -\hat{\beta}^*_{t-1})\}}_{\text{II: exploitation error on }a^*} + \underbrace{\{(X_t(a_t)-X_t(a^*))^\top\hat{\beta}^*_{t-1}\}}_{\text{III: controlled by UCB}}.
    \end{aligned}
\end{equation}
For Parts I and II, this corresponds to ``regrets due to exploitation''. Bandit algorithms typically construct a \textit{confidence ellipsoid} under the matrix norm ($I_{t-1}^{-1}$-norm) to quantify the difference between $\hat{\beta}^*_{t-1}$ and $\beta$ with high probability. Specifically, we aim to find an upper bound
$$
\|\hat{\beta}_{t-1}^*-\beta\|_{I_{t-1}^{-1}}= \sqrt{(\hat{\beta}_{t-1}^*-\beta)^\top I_{t-1}^{-1}(\hat{\beta}_{t-1}^*-\beta)}\leq \alpha_t
$$
that holds uniformly over all $t\in[T]$. Typically, $\alpha_t$ serves as a hyperparameter controlling the level of exploration in UCB and depends on the confidence level $\delta$, the dimension $d$ of $X_t$, and the round $t$, as proved in Theorem \ref{thm:1}.

For Part III, this corresponds to ``regret due to exploration'', which is governed by the exploratory nature of UCB. Since UCB selects the arm with the highest upper confidence bound, we have
\begin{equation*}
    -X_t(a_t)^\top\hat{\beta}^*_{t-1} + \alpha_t \cdot \|X_t(a_t)\|_{I_{t-1}^{-1}} \geq -X_t(a^*)^\top\hat{\beta}^*_{t-1} + \alpha_t \cdot \|X_t(a^*)\|_{I_{t-1}^{-1}},
\end{equation*}
which indicates that 
\begin{equation*}
    (X_t(a_t) -X_t(a^*))^\top\hat{\beta}^*_{t-1} \leq \alpha_t \cdot \|X_t(a_t)\|_{I_{t-1}^{-1}}-\alpha_t \cdot \|X_t(a^*)\|_{I_{t-1}^{-1}}.
\end{equation*}

Going back to Equation \eqref{eq:Delta_t}, we further have

\begin{equation}
    \begin{aligned}
    \Delta_t &= X_t(a^*)^\top\beta - X_t(a_t)^\top\beta \\&= \underbrace{\{X_t(a^*)^\top(\beta -\hat{\beta}^*_{t-1})\}}_{\text{I: exploitation error on }a^*} -\underbrace{\{X_t(a_t)^\top(\beta -\hat{\beta}^*_{t-1})\}}_{\text{II: exploitation error on }a_t} + \underbrace{\{(X_t(a^*)-X_t(a_t))^\top\hat{\beta}^*_{t-1}\}}_{\text{III: controlled by UCB}}\\
    &\leq \|\hat{\beta}_{t-1}-\beta\|_{I_{t-1}} \|X_t(a^*)\|_{I_{t-1}^{-1}} + \|\hat{\beta}_{t-1}-\beta\|_{I_{t-1}} \|X_t(a_t)\|_{I_{t-1}^{-1}} + \alpha_t \|X_t(a_t)\|_{I_{t-1}^{-1}} - \alpha_t \|X_t(a^*)\|_{I_{t-1}^{-1}}\\&\leq \alpha_t \|X_t(a^*)\|_{I_{t-1}^{-1}}+ \alpha_t \|X_t(a_t)\|_{V_{t-1}^{-1}} +  \alpha_t \|X_t(a_t)\|_{I_{t-1}^{-1}} - \alpha_t \|X_t(a^*)\|_{I_{t-1}^{-1}}\\& = 2 \alpha_t \|X_t(a_t)\|_{I_{t-1}^{-1}}.
    \end{aligned}
\end{equation}

Therefore, to bound $\sum_{t=1}^T \Delta_t$, we divide the remainder of the proof into three steps. In Step 1, we determine the order of $\|\hat{\beta}_{t-1}-\beta\|_{V_{t-1}^{-1}}$ (or $\alpha_t$); in Step 2, we detail the order of $\|X_t(a_t)\|_{V_{t-1}^{-1}}$. Finally, in Step 3, we summarize the results and complete the proof.

\textbf{Step 1.} Derive the order of $\alpha_t$ such that $\|\hat{\beta}_{t-1}^*-\beta\|_{I_{t-1}}\leq \alpha_t$.

From the result of Theorem~\ref{thm:1}, it follows directly that for any $\delta>0$, the following holds uniformly for all rounds $t\geq 0$:
$$
\|\hat{\beta}_t^*-\beta\|_{I_{t}} \leq  O(\sqrt{d\log (4tL^2/d)-2\log \delta}).
$$
Therefore, we can set $\alpha_t= O(\sqrt{d\log (4tL^2/d)-2\log \delta}) $ as the UCB exploration hyperparameter. Since $\alpha_t$ grows roughly like $\sqrt{\log t}$ with $t$, it increases slowly over time and is consistent with the order commonly used in the literature on generalized linear bandits \citep{filippi2010parametric}.

\textbf{Step 2.} Derive the order of $\|X_t(a_t)\|_{I_{t-1}^{-1}}$.

Recall that 
$$
\begin{aligned}
\tilde{I}_{t} &= \sum_{i=1}^N\int_0^{\tau_t} \left[\{X_i-\mu(s)\}\{X_i-\mu(s)\}^\top\right]A_i(\tau_t,ds).
\end{aligned}
$$
Using similar logic as in Equation \eqref{eq:Q_3}, one can derive that $\|X_t(a_t)\|_{I_{t-1}^{-1}}  = \|X_t(a_t)\|_{\tilde{I}_{t-1}^{-1}} + o(1)$.
Therefore, we only need to bound $\|X_t(a_t)\|_{\tilde{I}_{t-1}^{-1}}$.

According to Assumption \ref{assump:1}.a and \ref{assump:1}.c, 
$$
\|X_t(a_t)\|^2_{\tilde{I}_{t-1}^{-1}} \leq \|X_t(a_t)\|_{2}^2 \cdot \|\tilde{I}_{t-1}^{-1}\|_{2} \leq L^2/\lambda_{\min}(\tilde{I}_{t-1})\leq dL^2/\kappa (t-1).
$$
Thus,
$$
\sum_{t=1}^T \|X_t(a_t)\|^2_{\tilde{I}_{t-1}^{-1}} \leq \sum_{t=1}^T dL^2/\kappa t = O\Big(\frac{dL^2}{\kappa} \log T\Big),
$$
which gives us $\sum_{t=1}^T \|X_t(a_t)\|^2_{{I}_{t-1}^{-1}} = O\Big(\frac{dL^2}{\kappa} \log T\Big)$.

\textbf{Step 3.} Summary.

Returning to Equation~\eqref{eq:Delta_t} and combining it with the results from Steps 1 and 2, we further obtain
$$
\begin{aligned}\sum_{t=1}^T \Delta_t& \leq 2\sum_{t=1}^T\alpha_t \|X_t(a_t)\|_{I_{t-1}^{-1}}\leq 2\sqrt{T \sum_{t=1}^T \alpha_t^2\|X_t(a_t)\|^2_{I_{t-1}^{-1}}}\\
&\leq 2\sqrt{ \alpha_T^2T\sum_{t=1}^T \|X_t(a_t)\|^2_{I_{t-1}^{-1}}}= 2 \alpha_T\sqrt{T\sum_{t=1}^T \|X_t(a_t)\|^2_{I_{t-1}^{-1}}} \\
&\lesssim O\Big(\sqrt{d\log (4L^2T/d) -2\log \delta}\Big)\cdot O\Big(\frac{\sqrt{d}L}{\sqrt{\kappa}} \sqrt{T} \sqrt{\log T}\Big)\\&=O(d\sqrt{T}\log T)\end{aligned}
$$
Therefore, the regret upper bound of UCB-based exploration for online survival analysis is of order $O(d\sqrt{T}\log T)$.

\subsection{Proof of regret bound for TS-based survival updates}\label{appendix:bound_TS}

Next, we derive the regret bound for TS-based exploration.
We similarly decompose the regret by 
\begin{equation}\label{eq:Delta_TS}
\begin{aligned}\Delta_t &= X_t(a_t)^\top\beta - X_t(a^*)^\top\beta \\&= \underbrace{\{X_t(a_t)^\top(\beta -\tilde{\beta}_{t-1})\}}_{\text{I: exploitation error on }a_t} -\underbrace{\{X_t(a^*)^\top(\beta -\tilde{\beta}_{t-1})\}}_{\text{II: exploitation error on }a^*} + \underbrace{\{(X_t(a_t)-X_t(a^*))^\top\tilde{\beta}_{t-1}\}}_{\text{III: }\leq 0, \text{ controlled by TS}}\\& \leq (X_t(a_t)-X_t(a^*))^\top(\tilde{\beta}_{t-1}-\beta).
\end{aligned}
\end{equation}
Since $\tilde{\beta}_{t}$ is chosen from the posterior, we can further decompose 
$$
\tilde{\beta}_{t}-\beta = \underbrace{(\tilde{\beta}_{t}-\hat{\beta}_t^* )}_{\text{exploration error}} + \underbrace{(\hat{\beta}_{t}^*-\beta)}_{\text{exploitation error}}.
$$

\textbf{Step 1.} Derive the uniform upper bound for $\|\tilde{\beta}_{t}-{\beta} \|_{I_t}$.

By Bernstein-von Mises Theorem, $(\tilde{\beta}_t-\hat{\beta}^*_t)\rightarrow\mathcal{N}(0,I_t^{-1}(\beta))$,
where $\beta$ is the true value. Therefore, according to Slusky theorem,
$$
\tilde{\beta}_t = \hat{\beta}_t^* + I_t^{-1/2} Z_t + R_t,
$$
where $Z_t \sim\mathcal{N}(\boldsymbol{0}_d,I_d)$, and $R_t$ is the remainder term. Thus,
$$
\|\tilde{\beta}_t-\hat{\beta}_t^*\|_{I_{t}} \leq \|I_t^{-1/2} Z_t\|_{I_{t}}  + \|R_t\|_{I_{t}}   = \|Z_t\|_2 + \|R_t\|_{I_{t}}.
$$
Therefore, bounding $\|\tilde{\beta}_t-\hat{\beta}_t^*\|_{I_{t}}$ is equivalent to bound $\|Z_t\|_2$ and $\|R_t\|_{I_{t}}$.

First, let’s quantify $\|Z_t\|_2$.

By the Laurent-Massart inequality \citep{laurent2000adaptive},
$$
\mathbb{P}(\|Z_t\|_2^2> d + 2\sqrt{dx} + 2x)\leq e^{-x}.
$$
Take $x = \log (T/\delta)$, gives per-step probability bound $\delta/T$. Thus, with probability at least $1-\delta/T$, we have $\|Z_t\|_2^2\leq d + 2\sqrt{d\log (T/\delta)} + 2\log (T/\delta):= b_T$.

For any $t\in[T]$, 
$$
\mathbb{P}(\exists t\leq T,\text{s.t. } \|Z_t\|_2^2> b_T) \leq \sum_{t=1}^T \mathbb{P}(\|Z_t\|_2^2> b_T)\leq T\cdot \delta/T = \delta.
$$
Thus, with probability $1-\delta$, we have  $\|Z_t\|_2^2 \leq d + 2\sqrt{d\log (T/\delta)} + 2\log (T/\delta) =O((\sqrt{d}+1)\log(T/\delta))$ for all $t\leq T$.

Next, we derive the uniform upper bound for $\|R_t\|_{I_{t}}$.

This part requires the approximation accuracy of Bayesian update to the MLE estimator for Gaussian prior. 
As the leading term $I_t^{-1/2}Z_t = O_p(1/\sqrt{t})$ dominates,  $R_t$ is a remainder term with $R_t= O_p(1/t)$ and $\|R_t\|_{I_t} =O_p(1/\sqrt{t})$, which is a point-wise bound for each $t$. There exists constants $L_1$ and $L_2$, such that
\begin{equation*}
    \|R_t\|_{I_{t}} \leq \begin{cases}
L_1, & \text{if } t \leq t_0, \\
L_2\cdot t^{-1/2}, & \text{otherwise.}
\end{cases}
\end{equation*}
The uniform bound is then 
\begin{equation*}
    \sup_{t\geq 1} \|R_t\|_{I_{t}} \leq \max \{L_1,L_2\cdot t_0^{-1/2}\} = O(1). 
\end{equation*}
Thus, 
$$
\|\tilde{\beta}_t-\hat{\beta}^*_t\|_{I_{t}} \leq   \|Z_t\|_2 + \|R_t\|_{I_{t}}\leq O((\sqrt{d}+1)\log(t/\delta)) + O(1) \\=O((\sqrt{d}+1)\log(t/\delta)).
$$
According to Theorem \ref{thm:1}, $
\|\hat{\beta}_t^*-\beta\|_{I_{t}} = O(\sqrt{d\log (4tL^2)-\log \delta}) = O(\sqrt{d\log t})$. Therefore, 
$$
\|\tilde{\beta}_{t}-\beta\|_{I_{t}} \leq \underbrace{\|\tilde{\beta}_{t}-\hat{\beta}_t^* \|_{I_{t}}}_{\text{exploration error}} + \underbrace{\|\hat{\beta}_{t}^*-\beta\|_{I_{t}}}_{\text{exploitation error}}\lesssim O(\sqrt{d}\log t).
$$

\textbf{Step 2.} Summary.

Now let's decompose $\Delta_t$ for TS-based survival updates and derive the final regret bound.

Following similar procedure as Step 2 of Section \ref{appendix:bound_UCB}, we have $\sum_{t=1}^T \|X_t(a_t)\|^2_{{I}_{t-1}^{-1}} = O\big(\frac{dL^2}{\kappa} \log T\big)$, and $\sum_{t=1}^T \|X_t(a^*)\|^2_{{I}_{t-1}^{-1}} = O\big(\frac{dL^2}{\kappa} \log T\big)$. Based on Equation \eqref{eq:Delta_TS}, we have
$$
\begin{aligned}\sum_{t=1}^T \Delta_t& \leq 2\|\tilde{\beta}_{t-1}-\beta\|_{I_{t-1}}\sqrt{T\sum_{t=1}^T \|X_t(a_t)\|^2_{I_{t-1}^{-1}}} \lesssim O(\sqrt{d}\log T)\cdot O\big(\frac{\sqrt{d}L}{\sqrt{\kappa}} \sqrt{T\log T}\big)\\&=O(d\sqrt{T}\log T \sqrt{\log T}).
\end{aligned}
$$
Therefore, the regret upper bound of TS-based exploration for online survival analysis is of order $O(d\sqrt{T}\log T\sqrt{\log T})$.

\subsection{Proof of regret bound for EG-based survival updates}\label{appendix:bound_EG}
Finally, we derive the regret upper bound for EG-based survival updates. Given the regret analyses for UCB and TS, the EG-based exploration follows a similar proof structure and can be established under an appropriately chosen exploration probability $\epsilon_t$.
\begin{equation*}
    \Delta_t = X_t(a_t)^\top\beta - X_t(a^*)^\top\beta   = \epsilon_t\mathbb{E}[\Delta_t\mid \text{explore}] + (1-\epsilon_t)\mathbb{E}[\Delta_t\mid \text{exploit}].
\end{equation*}
For the regret due to exploitation (i.e., the second term), 
$$
\mathbb{E}[\Delta_t\mid \text{exploit}] \leq  X_t(a_t)^\top(\beta-\hat{\beta}_{t-1}^*) + X_t(a^*)^\top(\hat{\beta}_{t-1}^*-\beta).
$$
By Theorem \ref{thm:1}, 
$$
|X(a)^\top (\hat{\beta}_{t-1}^*-\beta)|\leq \|\hat{\beta}_{t-1}^*-\beta\|_{I_{t-1}}\cdot\|X(a)\|_{I_{t-1}^{-1}}\leq  \alpha_{t-1}\cdot\|X(a)\|_{I_{t-1}^{-1}},
$$
where $\alpha_t = O(\sqrt{d\log (4tL^2/d)-2\log \delta})$.

For the regret due to exploration (i.e., the first term), by Assumption~\ref{assump:1}.b, we have $\mathbb{E}[\Delta_t\mid \text{explore}] \leq \epsilon_t\Delta_{\max}$, and since $\Delta_{\max}$ is bounded, this term is of the same order as $\epsilon_t$. Therefore,
\begin{equation*}
\begin{aligned}
    \sum_{t=1}^T\Delta_t 
    &= \sum_{t=1}^T\epsilon_t\mathbb{E}[\Delta_t\mid \text{explore}] + \sum_{t=1}^T(1-\epsilon_t)\mathbb{E}[\Delta_t\mid \text{exploit}] \\
    &\leq \Delta_{\max}\sum_{t=1}^T \epsilon_t +   2\alpha_T\cdot\sum_{t=1}^T\|X(a)\|_{I_{t-1}^{-1}}\\
    &\leq \Delta_{\max}\sum_{t=1}^T \epsilon_t + 2\alpha_T\cdot\sqrt{T\sum_{t=1}^T\|X(a)\|_{I_{t-1}^{-1}}^2}\\
    &\lesssim \Delta_{\max}\sum_{t=1}^T \epsilon_t +  O(\sqrt{d\log (4TL^2/d)-2\log \delta})\cdot O\Big(\frac{\sqrt{d}L}{\sqrt{\kappa}} \sqrt{T\log T}\Big)\\
    & \lesssim  \Delta_{\max}\sum_{t=1}^T \epsilon_t  + O(d\sqrt{T}\log T).
\end{aligned}
\end{equation*}
If we choose $\epsilon_t = \min\{1,c/t\}$ for some constant $c$, then $\sum_{t=1}^T \epsilon_t = O(\log T)$, making the first term above negligible compared with exploitation term $O(d\sqrt{T}\log T)$. Therefore, the regret upper bound of EG-based exploration for online survival analysis is of order $O(d\sqrt{T}\log T)$.

This finishes the proof of Theorem \ref{thm:2}.

\end{document}